\newlist{inlinelist}{enumerate*}{1}
\setlist[inlinelist]{label=(\roman*)}
\theoremstyle{plain}
\newtheorem{theorem}{Theorem}[section]
\newtheorem{proposition}[theorem]{Proposition}
\newtheorem{lemma}[theorem]{Lemma}
\theoremstyle{definition}
\newtheorem{definition}[theorem]{Definition}
\theoremstyle{remark}
\newtheorem{remark}[theorem]{Remark}
\newcommand{\argmax}[1] {\underset{#1}{\arg \max }} 
\newcommand{\argmin}[1] {\underset{#1}{\arg \min }} 
\newcommand{\s} {\mathbf{s}}
\newcommand{\act} {\mathbf{a}}
\newcommand*\diff{\mathop{}\!\mathrm{d}}
\definecolor{myblue}{rgb}{0.87,0.92,0.96}
\definecolor{mygray}{rgb}{0.859,0.859,0.859}
\crefname{theorem}{Theorem}{Theorems}
\crefname{proposition}{Proposition}{Propositions}
\crefname{lemma}{Lemma}{Lemmas}
\crefname{corollary}{Corollary}{Corollaries}
\crefname{definition}{Definition}{Definitions}
\crefname{assumption}{Assumption}{Assumptions}
\crefname{remark}{Remark}{Remarks}
\title{Train Once, Get a Family: State-Adaptive Balances for Offline-to-Online Reinforcement Learning\thanks{$^*$ Equal contribution. Email: \texttt{\{wsz21, yangqs19\}@mails.tsinghua.edu.cn}.}\thanks{$^\dag$ Corresponding author. Email: \texttt{gaohuang@tsinghua.edu.cn}.}}
\renewcommand\footnotemark{}
\author{%
  Shenzhi Wang$^{1*}$, Qisen Yang$^{1*}$, Jiawei Gao$^{1}$, Matthieu Lin$^{2}$, Hao Chen$^{4}$, Liwei Wu$^{1}$ \\ \textbf{Ning Jia$^{3}$, Shiji Song$^{1}$, Gao Huang$^{1\dag}$}
  \\
  $^{1}$ Department of Automation, BNRist, Tsinghua University \\
  $^{2}$ Department of Computer Science, BNRist, Tsinghua University \\
  $^{3}$ Beijing Academy of Artificial Intelligence (BAAI) \qquad 
  $^{4}$ Independent Researcher
}
\begin{document}

\maketitle

\etocdepthtag.toc{mtchapter}
\etocsettagdepth{mtchapter}{subsection}
\etocsettagdepth{mtappendix}{none}

\begin{center}
    \vspace{-30pt}
    Project Page: \url{https://shenzhi-wang.github.io/NIPS_FamO2O}
    \vspace{10pt}
\end{center}

\begin{abstract}
 Offline-to-online reinforcement learning (RL) is a training paradigm that combines pre-training on a pre-collected dataset with fine-tuning in an online environment.
However, the incorporation of online fine-tuning can intensify the well-known distributional shift problem.
Existing solutions tackle this problem by imposing a \emph{policy constraint} on the \emph{policy improvement} objective in both offline and online learning.
They typically advocate a single balance between policy improvement and constraints across diverse data collections. 
This one-size-fits-all manner may not optimally leverage each collected sample due to the significant variation in data quality across different states.
To this end, we introduce Family Offline-to-Online RL (FamO2O), a simple yet effective framework that empowers existing algorithms to determine state-adaptive improvement-constraint balances.
FamO2O utilizes a \emph{universal model} to train a family of policies with different improvement/constraint intensities, and a \emph{balance model} to select a suitable policy for each state.
Theoretically, we prove that state-adaptive balances are necessary for achieving a higher policy performance upper bound.
Empirically, extensive experiments show that FamO2O offers a statistically significant improvement over various existing methods, achieving state-of-the-art performance on the D4RL benchmark.
Codes are available at \url{https://github.com/LeapLabTHU/FamO2O}.
\end{abstract}

\begin{figure}[!h]
    \centering
    \includegraphics[width=.97\textwidth]{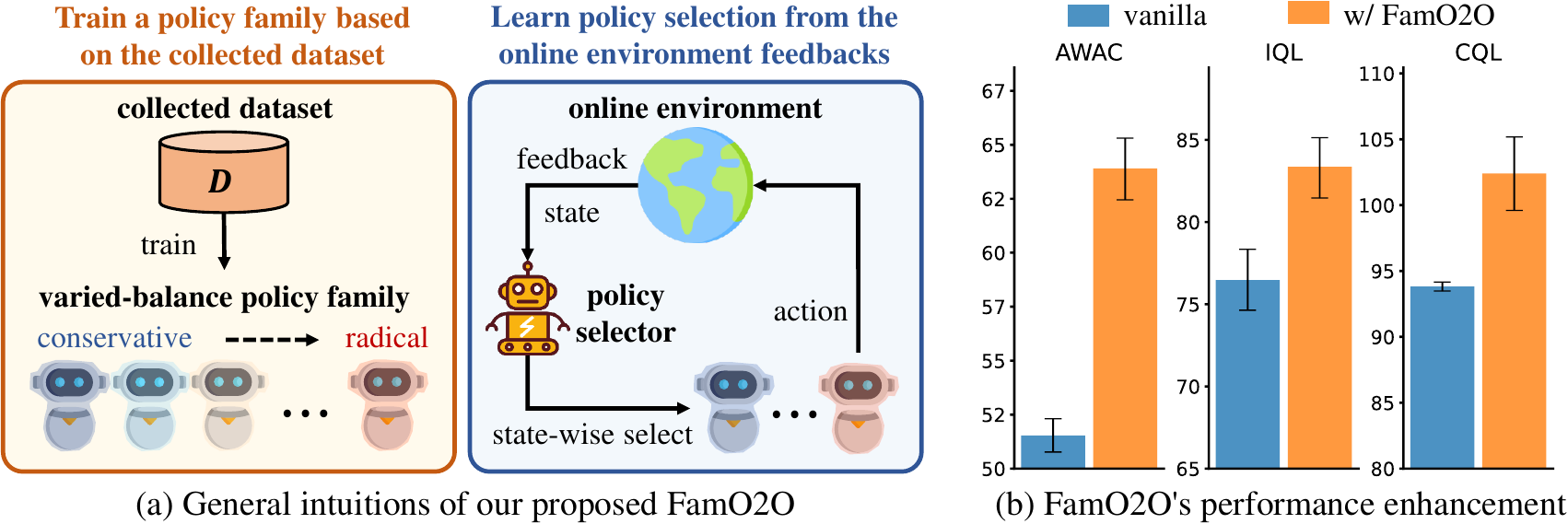}
    \caption{FamO2O trains a policy family from datasets and selects policies state-adaptively using online feedback. Easily integrated, FamO2O statistically enhances existing algorithms' performance.}
    \label{fig:teaser}
    \vspace{10pt}
\end{figure}

\section{Introduction}
\vspace{-5pt}
Offline reinforcement learning (RL) provides a pragmatic methodology for acquiring policies utilizing pre-existing datasets, circumventing the need for direct environment interaction~\cite{Sergey2020tutorial}.
Nonetheless, the attainable policy performance in offline RL is frequently constrained by the quality of the dataset~\cite{td3+bc+finetune}.
The offline-to-online RL paradigm addresses this limitation by refining the offline RL policy through fine-tuning in an online setting~\cite{nair2020awac}.

While online fine-tuning can indeed elevate policy performance, it also potentially exacerbates the issue of distributional shift~\cite{Sergey2020tutorial}, where policy behavior diverges from the dataset distribution. Such shifts typically ensue from drastic policy improvements and are further amplified by state~distribution changes when transitioning from offline learning to online fine-tuning~\cite{fujimoto2019BCQ, kumar2019stabilizing, fu2019diagnosing, kumar2020discor}. Prior works have attempted to counter this by imposing policy constraints on the policy improvement objective to deter excessive exploration of uncharted policy space~\cite{kumar2019stabilizing, wu2019behavior, nair2020awac}. However, this conservative approach can inadvertently stifle policy improvement~\cite{nair2020awac}. In essence, offline-to-online RL necessitates an effective balance between \emph{policy improvement} and \emph{policy constraint} during policy optimization.

\begin{wrapfigure}{r}{0.62\textwidth}
\vspace{-15pt}
\begin{center}
\includegraphics[width=0.62\textwidth]{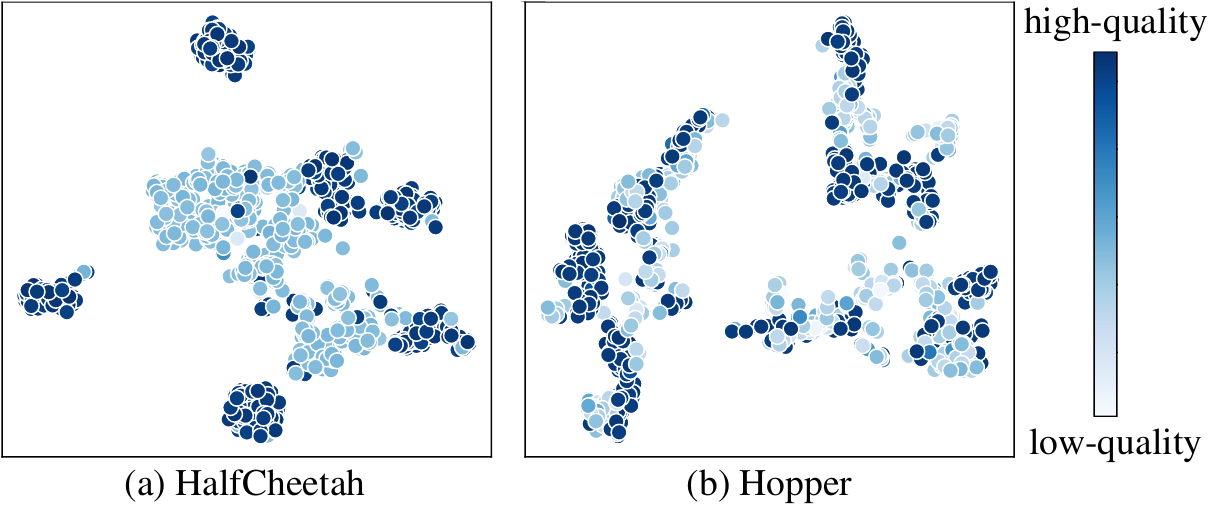}
\end{center}
    \vspace{-5pt}
    \caption{A t-SNE~\cite{tSNE} \textbf{visualization of randomly selected states} from (a) HalfCheetah and (b) Hopper medium-expert datasets in D4RL~\cite{fu2020d4rl}. The color coding represents the return of the trajectory associated with each state. This visualization underscores \textbf{the significant variation in data quality across different states}.}
    \vspace{-10pt}
    \label{fig:different-state-quality}
\end{wrapfigure}


Regrettably, prior offline-to-online RL algorithms tend to adopt a monolithic approach towards this improvement-constraint trade-off, indiscriminately applying it to all data in a mini-batch~\cite{td3+bc, td3+bc+finetune, wu2022supported} or the entire dataset~\cite{nair2020awac, iql, kumar2020cql, lyu2022mildly}. Given the inherent data quality variation across states (see \cref{fig:different-state-quality}), we argue that this one-size-fits-all manner may fail to optimally exploit each sample. In fact, data yielding high trajectory returns should encourage more ``conservative" policies, while data leading to poor returns should incite more ``radical" policy improvement.

In this paper, we introduce a novel framework, Family Offline-to-Online RL (FamO2O), which can discern a state-adaptive improvement-constraint balance for each state. FamO2O's design is founded on two key insights delineated in \cref{fig:teaser}(a): 
\begin{inlinelist}
\item The collected dataset, abundant in environmental information, could facilitate the training of a diverse policy family ranging from conservative to radical, and 
\item feedback from online learning might assist in selecting an appropriate policy from this family for each state. 
\end{inlinelist}
As depicted in \cref{fig: framework}, FamO2O incorporates a \emph{universal model} and a \emph{balance model}. The universal model, conditioned on a \emph{balance coefficient}, determines the degree of policy conservatism or radicalism, while the balance model learns a mapping from states to balance coefficients, aiding the universal model in tailoring its behavior to each specific state.

FamO2O represents, to the best of our knowledge, the first offline-to-online RL approach harnessing the power of state-adaptive improvement-constraint balances. 
Theoretically, we establish that in policy optimization, \emph{point-wise KL constraints} afford a superior performance upper bound compared to the \emph{distributional KL constraints} adopted in prior works~\cite{awr, nair2020awac, iql}. Importantly, these state-adaptive balances become indispensable when addressing point-wise KL constraints, thereby underlining the necessity of incorporating such balances in offline-to-online RL.
Experimental results, as summarized in \cref{fig:teaser}(b), reveal that FamO2O is a simple yet effective framework that statistically significantly improves various offline-to-online RL algorithms and achieves state-of-the-art performance.

\vspace{-5pt}
\section{Preliminaries}
\vspace{-12pt}
We introduce essential RL and offline-to-online RL concepts and notations here. 
For descriptive convenience and theoretical analysis, we use the advantaged-weight regression (AWR) algorithm framework~\cite{awr, nair2020awac, iql}, but \textbf{FamO2O isn't limited to the AWR framework}.
We later demonstrate its integration with non-AWR algorithms in \cref{sec: exp on other algorithms} and \cref{sec: FamO2O extension}.

\vspace{-5pt}
\subsection{Reinforcement learning formulation} 
\vspace{-5pt}
RL is typically expressed as a Markov decision process (MDP)~\cite{sutton1998}, denoted as $(\mathcal{S}, \mathcal{A}, P, d_0, R, \gamma)$. Here, $\mathcal{S}$ and $\mathcal{A}$ are the state\footnote{For simplicity, we use ``state'' and ``observation'' interchangeably in fully or partially observed environments.} and action space; $P(\s_{t+1}|\s_t, \act)$ is the environmental state transition probability; $d_0(\s_0)$ represents initial state distribution; $R(\s_t, \act_t, \s_{t+1})$ is the reward function; and $\gamma\in (0, 1]$ is the discount factor.

\subsection{Offline-to-online reinforcement learning}
Offline-to-online RL is a training paradigm including two phases:
\begin{inlinelist}
    \item \textit{offline pre-training}: pre-training a policy based on an offline dataset;
    \item \textit{online fine-tuning}: fine-tuning the pre-trained policy by interacting with the environment.
\end{inlinelist}
Note that in the offline pre-training phase, the policy cannot interact with the environment, but in the online fine-tuning phase, the policy has access to the offline dataset.

Similar to offline RL algorithms, offline-to-online RL algorithms' training objectives usually consist of two terms, either explicitly~\cite{nair2020awac, iql} or implicitly~\cite{kumar2020cql, lee2022pessimistic}:
\begin{inlinelist}
\item \textit{policy improvement}, which aims to optimize the policy according to current value functions;
\item \textit{policy constraint}, which keeps the policy around the distribution of the offline dataset or current replay buffer.
\end{inlinelist}
Using the AWR algorithm framework~\cite{awr, nair2020awac, iql}, we demonstrate our method (also applicable to non-AWR algorithms) and define notations in \cref{eq: training objective} for later use.
%
\begin{equation}
        L_\pi=\mathbb{E}_{(\s, \act) \sim \mathcal{D}} \Bigg[
        \overbrace{\exp \Big(\underbrace{\beta}_{\text{balance coefficient}}\big(\underbrace{Q(\s, \act)-V(\s)}_{\text{policy improvement}}\big)\Big)}^{\text{imitation weight}} \cdot \underbrace{\log \pi(\act|\s)}_{\text{policy constraint}}\Bigg].
        \label{eq: training objective}
\end{equation}

$L_\pi$ is a maximization objective, while $\mathcal{D}$ represents the collected dataset. Initially, during offline pre-training, $\mathcal{D}$ starts with the pre-collected offline dataset $\mathcal{D}_{\text{offline}} = \{(\s_k, \act_k, \s^{\prime}_k, r_k) \mid k=1, 2, \cdots, N\}$. As we move to the online fine-tuning phase, online interaction samples are continuously incorporated into $\mathcal{D}$~\cite{nair2020awac,iql}. The balance coefficient $\beta$ is a predefined hyperparameter moderating between the policy improvement and policy constraint terms, while the imitation weight sets the imitation intensity for the state-action pair $(s, a)$.

\section{State-Adaptive Balance Coefficients}
\label{sec: rationality of state-adaptive balance coefficients}
\vspace{-5pt}
Our design of state-adaptive improvement-constraint balances is motivated by the observations that
\begin{inlinelist}
    \item the quality of the dataset's behavior, \textit{i.e.}, the trajectory returns,  fluctuates greatly with different states, as shown in \cref{fig:different-state-quality}; 
    \item state-dependent balances are conducive to a higher performance upper bound.
\end{inlinelist}
In this section, we will theoretically validate the latter point.

We first present the policy optimization problem with point-wise KL constraints in \cref{def: point-wise constraint optimization problem definition}, which is the focus of FamO2O:

\begin{definition}[Point-wise KL Constrained Optimization Problem]
\label{def: point-wise constraint optimization problem definition}
We consider a policy optimization problem defined as follows:
%
\begin{align}
\max_{\pi} & \ \ \mathbb{E}_{\s\sim d{\pi_\beta}(\cdot), \act \sim \pi(\cdot|\s)} \left[Q^{\pi^k}(\s, \act)-V^{\pi^k}(\s)\right] \label{eq: optimization problem}\\
& \text { s.t. } D_{\mathrm{KL}}(\pi(\cdot | \s) | \pi_{\beta}(\cdot | \s)) \leq \epsilon_\s, \quad \forall \s \in \mathcal{S} \label{eq: kl constraint} \\
& \int_{\act\in\mathcal{A}} \pi(\act | \s)\diff\act =1, \quad \forall \s \in \mathcal{S} \label{eq: normalization constraint}.
\end{align}

Here, $\pi^k (k \in \mathbb{N})$ denotes the policy at iteration $k$, $\pi_\beta$ signifies a behavior policy representing the action selection way in the collected dataset $\mathcal{D}$, $d_{\pi_{\beta}}(\s)$ refers to the state distribution of $\pi_\beta$, and $\epsilon_\s$ is a state-related constant. The optimal policy derived from \crefrange{eq: optimization problem}{eq: normalization constraint} is designated as $\pi^{k+1}$.
\end{definition}

The optimization problem common in previous work~\cite{awr, nair2020awac, iql} is shown in \cref{def: distributional constraint optimization problem definition}:
\begin{definition}[Optimization problem with distributional KL constraints]
\label{def: distributional constraint optimization problem definition}
The definition of the policy optimization problem with distributional KL constraints is the same as \cref{def: point-wise constraint optimization problem definition}, except that \cref{eq: kl constraint} in \cref{def: point-wise constraint optimization problem definition} is substituted by \cref{eq: distributional constraint}, where $\epsilon$ is a constant:
%
\begin{equation}
      \int_{\s\in\mathcal{S}} d_{\pi_\beta}(\s) D_{\mathrm{KL}}(\pi(\cdot | \s) \| \pi_{\beta}(\cdot | \s)) \diff\s \le \epsilon \label{eq: distributional constraint}.
      \vspace{5pt}
\end{equation} 
\end{definition}

\begin{remark}
    The update rule in \cref{eq: training objective} is based on the optimization problem in \cref{def: distributional constraint optimization problem definition}.
\end{remark}

The point-wise constraints' superiority over distributional constraints is shown in \cref{prop: point-wise superiority}:
\begin{proposition}[Advantage of point-wise KL constraints] \label{prop: point-wise superiority}
 Denote the optimal value in \cref{def: point-wise constraint optimization problem definition} as $J^k_*[\{\epsilon_\s, \s\in\mathcal{S}\}]$, the optimal value in \cref{def: distributional constraint optimization problem definition} as $J^k_*[\epsilon]$.
These optimal values satisfy:
\begin{equation}
    \forall \epsilon \ge 0, \quad \exists \{\epsilon_\s, \s \in \mathcal{S}\}, \quad J^k_*[\{\epsilon_\s, \s\in\mathcal{S}\}] \ge J^k_*[\epsilon].
\end{equation}
\end{proposition}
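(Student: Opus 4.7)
The plan is to reduce the proposition to a pure feasibility argument: if I can choose state-indexed budgets $\{\epsilon_\s\}$ under which the optimizer of the distributional problem is itself feasible, then the point-wise optimum automatically dominates. The natural choice is to tailor $\epsilon_\s$ to the per-state KL divergence that the distributional optimizer already spends at $\s$; this shows that the distributional constraint is an ``aggregated'' relaxation of a carefully chosen collection of point-wise constraints, never the other way around.

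Concretely, I would first let $\pi^*$ denote an optimal policy for \cref{def: distributional constraint optimization problem definition} at budget $\epsilon$, so that
\begin{equation*}
\mathbb{E}_{\s \sim d_{\pi_\beta}(\cdot),\, \act \sim \pi^*(\cdot|\s)}\!\left[Q^{\pi^k}(\s,\act) - V^{\pi^k}(\s)\right] \;=\; J^k_*[\epsilon].
\end{equation*}
Then I would define the per-state budgets
\begin{equation*}
\epsilon_\s \;\defeq\; D_{\mathrm{KL}}\!\left(\pi^*(\cdot|\s)\,\|\,\pi_\beta(\cdot|\s)\right), \qquad \forall\, \s \in \mathcal{S}.
\end{equation*}
With this choice, $\pi^*$ satisfies \cref{eq: kl constraint} (with equality for every $\s$) and obviously still satisfies the normalization constraint \cref{eq: normalization constraint}, so $\pi^*$ is feasible for the point-wise problem of \cref{def: point-wise constraint optimization problem definition}. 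Since the two problems share the identical objective \cref{eq: optimization problem}, the point-wise optimum is at least the value $\pi^*$ attains there, yielding $J^k_*[\{\epsilon_\s, \s \in \mathcal{S}\}] \geq J^k_*[\epsilon]$.

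The one technical wrinkle is whether the supremum in \cref{def: distributional constraint optimization problem definition} is attained; if not, I would replace the single optimizer $\pi^*$ with a $\delta$-suboptimal policy $\pi^{(\delta)}$, define $\epsilon_\s$ from $\pi^{(\delta)}$ in the same way, and take $\delta \downarrow 0$ to recover the stated inequality. Beyond this, no estimates are needed: the result is essentially structural, reflecting that any distributional KL ball is contained in the intersection of the point-wise KL balls of appropriate radii realized by its optimizer. I would close with a short remark noting that the inequality is typically strict whenever $\pi^*$ distributes its KL budget non-uniformly across $\s$, since a single scalar $\epsilon$ cannot simultaneously reproduce the per-state radii that a well-chosen $\{\epsilon_\s\}$ can tighten further — which is exactly the theoretical foothold FamO2O exploits.
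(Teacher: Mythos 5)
Your proposal is correct and is in essence the same argument as the paper's: the paper likewise establishes (via \cref{lemma: connection between distributional and point-wise constraints} and a decomposition of the distributional feasible region into a union of point-wise feasible regions) that the optimizer of \cref{def: distributional constraint optimization problem definition} is feasible for a point-wise problem whose budgets are the per-state KL divergences it realizes, and concludes from the shared objective. Your version is more direct (you skip the contradiction argument and the normalization $\int_{\s} d_{\pi_\beta}(\s)\epsilon_\s \diff\s = \epsilon$, which the proposition does not actually require), and your $\delta$-suboptimal fallback is at least as careful as the paper, which tacitly assumes the distributional optimum is attained.
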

\begin{proof}
Please refer to \cref{appendix: proof of point-wise superiority}.
\end{proof}

\cref{prop: point-wise superiority} indicates that the optimal value under the point-wise KL constraints, given suitable point-wise constraints, is \emph{no less than} that under distributional KL constraints. This finding justifies our approach under point-wise constraints.

\cref{prop: State-dependent balance coefficient} shows the necessity of state-dependent balance coefficient design in solving the point-wise KL constraint optimization problem:
\begin{proposition}[State-dependent balance coefficient]
\label{prop: State-dependent balance coefficient}
Consider the optimization problem in \cref{def: point-wise constraint optimization problem definition}.
Assume that the state space $\mathcal{S} = [s_{\min}, s_{\max}]^l$ ($l$ is the state dimension), and the feasible space constrained by \crefrange{eq: kl constraint}{eq: normalization constraint} is not empty for every $\s \in \mathcal{S}$.
Then the optimal solution of $\pi^{k+1}$, denoted as $\pi_*^{k+1}$, satisfies that $\forall \s \in \mathcal{S}, \act \in \mathcal{A}$,
\begin{equation}
    \pi_*^{k+1}(\act|\s) = \frac{\pi_\beta(\act|\s)}{Z_{\s}} \exp\left({\color{blue}\beta_\s}(Q^{\pi^k}(\s, \act)-V^{\pi^k}(\s))\right),
    \label{eq: adaptive optimal policy}
\end{equation}
%
where $\beta_{\s}$ is a state-dependent balance coefficient, and $Z_{\s}$ is a normalization term.
When utilizing a parameterized policy $\pi_{\mathbf{\phi}}$ to approximate the optimal policy $\pi_*^{k+1}$, the training objective can be formulated as:
%
\begin{equation}
\label{eq: adaptive training objective}
    \mathbf{\phi} = \argmax{\phi}\mathbb{E}_{(\s, \act)\sim \mathcal{D}} \Big[ \exp({\color{blue}\beta_{\s}} (Q^{\pi^k}(\s, \act)-V^{\pi^k}(\s))) \log \pi_{\mathbf{\phi}}(\act|\s) \Big].
\end{equation}

\end{proposition}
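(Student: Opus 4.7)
The plan is to exploit the fact that, unlike the distributional constraint (\ref{eq: distributional constraint}) which couples all states through a single integral bound, the point-wise constraints (\ref{eq: kl constraint}) decouple the problem: each state $\s$ carries its own KL budget $\epsilon_\s$ and its own normalization constraint (\ref{eq: normalization constraint}), while the outer expectation $\mathbb{E}_{\s\sim d_{\pi_\beta}}[\,\cdot\,]$ in (\ref{eq: optimization problem}) integrates pointwise contributions that depend only on $\pi(\cdot|\s)$. Therefore, maximizing the objective is equivalent to solving, for each fixed $\s$ independently, the inner problem
\begin{equation*}
\max_{\pi(\cdot|\s)} \int_{\mathcal{A}} \pi(\act|\s)\bigl[Q^{\pi^k}(\s,\act)-V^{\pi^k}(\s)\bigr]\diff\act
\end{equation*}
subject to $D_{\mathrm{KL}}(\pi(\cdot|\s)\,\|\,\pi_\beta(\cdot|\s))\le\epsilon_\s$ and $\int\pi(\act|\s)\diff\act=1$. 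The assumption that the feasible set is non-empty at every $\s$ lets me apply this decomposition globally.

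Next I would form the Lagrangian for each per-state subproblem with multipliers $\lambda_\s\ge 0$ for the KL constraint and $\mu_\s\in\mathbb{R}$ for the normalization constraint, and take the functional derivative with respect to $\pi(\act|\s)$. Setting the derivative to zero gives
\begin{equation*}
Q^{\pi^k}(\s,\act)-V^{\pi^k}(\s) - \lambda_\s\bigl(\log\pi(\act|\s) - \log\pi_\beta(\act|\s) + 1\bigr) - \mu_\s = 0,
\end{equation*}
which I would solve for $\pi(\act|\s)$ and then absorb the additive constants coming from $\mu_\s$ and $\lambda_\s$ into a state-dependent normalizer $Z_\s$. Defining $\beta_\s\defeq 1/\lambda_\s$ yields exactly the closed form (\ref{eq: adaptive optimal policy}). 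The KKT stationarity together with complementary slackness identifies $\lambda_\s$ (and hence $\beta_\s$) as the state-dependent multiplier whose value is chosen so that the active KL constraint $D_{\mathrm{KL}}(\pi_*^{k+1}(\cdot|\s)\,\|\,\pi_\beta(\cdot|\s))=\epsilon_\s$ is met; existence of a positive $\lambda_\s$ follows from the non-empty feasibility assumption and the monotone dependence of the achieved KL on $\lambda_\s$.

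For the parameterized objective (\ref{eq: adaptive training objective}), I would project $\pi_*^{k+1}$ onto the parameterized family $\{\pi_\phi\}$ by minimizing the expected forward KL $\mathbb{E}_{\s\sim d_{\pi_\beta}}[D_{\mathrm{KL}}(\pi_*^{k+1}(\cdot|\s)\,\|\,\pi_\phi(\cdot|\s))]$. Dropping terms independent of $\phi$ leaves the cross-entropy $-\mathbb{E}_{\s\sim d_{\pi_\beta},\act\sim \pi_*^{k+1}(\cdot|\s)}[\log\pi_\phi(\act|\s)]$, which can be rewritten as an expectation over the dataset distribution by importance reweighting with $\pi_*^{k+1}/\pi_\beta$. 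Substituting the closed form (\ref{eq: adaptive optimal policy}) makes the ratio equal to $Z_\s^{-1}\exp(\beta_\s(Q^{\pi^k}(\s,\act)-V^{\pi^k}(\s)))$; absorbing the state-only factor $Z_\s^{-1}$ into the sampling weights (it does not affect the $\argmax$ over $\phi$) yields (\ref{eq: adaptive training objective}).

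The main obstacle I anticipate is the rigorous handling of the functional derivative and the KKT conditions in a continuous action space, in particular arguing that the multiplier $\lambda_\s$ is strictly positive (so $\beta_\s$ is well-defined) and measurable in $\s$; this needs the non-empty feasibility assumption plus a mild regularity argument that the maximum KL achieved under the unconstrained exponential-tilting family is continuous and monotone in $\lambda_\s$. Once that is established, the rest of the derivation is a standard Lagrangian calculation followed by a routine projection step.
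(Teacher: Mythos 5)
Your derivation is correct and reaches the same closed form, but it takes a genuinely different route from the paper. The paper does not decouple the problem per state: it first restates the problem on a \emph{finite} state set $\{\s_1,\dots,\s_m\}$, writes a single joint Lagrangian with multipliers $(\lambda_i,\mu_i)$ for all states at once, applies the KKT conditions to get $\pi_*^{k+1}(\act|\s_i)\propto\pi_\beta(\act|\s_i)\exp(\beta_{\s_i}(Q-V))$ with $\beta_{\s_i}=d_{\pi_\beta}(\s_i)/\mu_i$, and then passes to the continuum $\mathcal{S}=[s_{\min},s_{\max}]^l$ by a discretization argument for semi-infinite programs (dyadic grids $B_i$ dense in $\mathcal{S}$ plus a convergence theorem of Reemtsen), which is where the compactness assumption and the non-emptiness of the feasible set are actually used. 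Your observation that the objective is an integral of per-state terms while every constraint is per-state, so the whole problem splits into independent per-state subproblems, is cleaner and makes the state-dependence of $\beta_\s$ (as the reciprocal of the per-state KL multiplier) transparent; it buys you a shorter argument at the cost of the measurable-selection issue you correctly flag, which the paper's finite-to-infinite limit sidesteps. Two small cautions: your $\beta_\s=1/\lambda_\s$ and the paper's $\beta_\s=d_{\pi_\beta}(\s)/\mu_\s$ differ only because you factor the density out of each subproblem, so both are consistent; but your claim that dropping the state-dependent factor $Z_\s^{-1}$ ``does not affect the $\arg\max$ over $\phi$'' is only true for a nonparametric policy class --- for a parametric $\pi_\phi$ it reweights states and is an approximation, which the paper concedes explicitly by following the AWR/AWAC convention of simply omitting the term.
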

\begin{proof}
    The proof is deferred to \cref{appendix: proof of state-dependent balance coefficient}.
\end{proof}

In contrast to AWR~\cite{awr} and AWAC~\cite{nair2020awac}, \cref{prop: State-dependent balance coefficient} highlights state-dependent (marked in {\color{blue}blue}) balance coefficients in \crefrange{eq: adaptive optimal policy}{eq: adaptive training objective}, as opposed to a pre-defined hyperparameter in \cref{eq: training objective}. This state-adaptiveness is due to \cref{prop: State-dependent balance coefficient} considering the finer-grained constraints in \cref{def: point-wise constraint optimization problem definition}. Together, \cref{prop: point-wise superiority} and \cref{prop: State-dependent balance coefficient} indicate state-adaptive balance coefficients contribute to a higher performance upper bound.

\section{Family Offline-to-Online RL}
\begin{wrapfigure}{r}{0.5\textwidth}
\vspace{-50pt}
\begin{center}
\includegraphics[width=.5\textwidth]{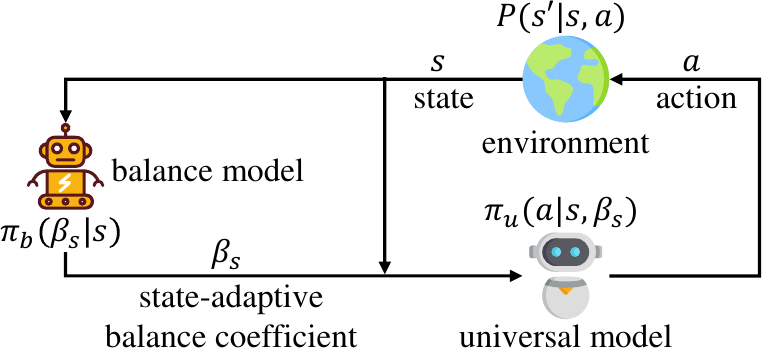}
    \vspace{-10pt}
    \caption{FamO2O's \textbf{inference} process. 
    For each state $\s$, the balance model $\pi_b$ computes a state-adaptive balance coefficient $\beta_\s$.
    Based on $\s$ and $\beta_\s$, the universal model $\pi_u$ outputs an action $\act$.}
    \label{fig: framework}
\end{center}
    \vspace{-15pt}
\end{wrapfigure}

\cref{sec: rationality of state-adaptive balance coefficients} theoretically proves that finer-grained policy constraints enhance performance upper bounds, necessitating state-adaptive balance coefficients. 
Accordingly, we introduce FamO2O, 
a framework adaptively assigning a balance~coefficient to each state, easily implemented over various offline-to-online algorithms like \cite{nair2020awac, iql, kumar2020cql}, hereafter called the ``base algorithm''.

Essentially, FamO2O trains a policy family with varying balance coefficients during offline pre-training. During online fine-tuning, FamO2O identifies the appropriate policy, corresponding to the suitable balance coefficient, for each state from this policy family.
In this section, we present FamO2O using the AWR algorithm framework (\cref{eq: training objective}).
FamO2O's compatibility with non-AWR algorithms is discussed in \cref{sec: FamO2O extension}.

As shown in \cref{fig: framework}, FamO2O's policy consists of two components: a \emph{universal model} $\pi_u$ and a \emph{balance model} $\pi_b$.
Denote the space of balance coefficients as $\mathcal{B}$. 
For every state $\s$, the balance model\footnote{Despite $\pi_u$ and $\pi_b$ being stochastic models, we notate them as functions using ``$\mapsto$" hereafter for brevity.} $\pi_b: \mathcal{S}\mapsto\mathcal{B}$ figures out a suitable balance coefficient $\beta_\s$;
based on the state $\s$ and state-related balance coefficient $\beta_\s$, the universal model $\pi_u: \mathcal{S}\times \mathcal{B} \mapsto \mathcal{A}$ outputs an action.
The balance coefficient $\beta_\s$ is to control the conservative/radical degree of the universal model $\pi_u$ in dealing with the state $\s$.

\subsection{Learning universal model}\label{sec: learning universal model}
We initially address training the universal model $\pi_u$, aimed at learning a policy family with varying balances between policy improvement and constraint. The formal optimization target of $\pi_u$ is:
\begin{equation}
\label{eq: the optimization target of the universal model}
     \pi_u^{k+1} = \argmax{\pi_u}\mathbb{E}_{(\s, \act)\sim \mathcal{D}} \Big[ \exp(\beta_\s (Q^{k}(\s, \act)-V^{k}(\s))) \log \pi_u(\act|\s, {\color{blue}\beta_\s}) \Big].
\end{equation} 
$Q^{k}$ and $V^k$, detailed in \cref{sec: learning value functions}, represent $Q$ and $V$ functions at iteration $k$. \cref{eq: the optimization target of the universal model} echoes \cref{eq: adaptive training objective}, but the policy also takes a balance coefficient $\beta_\s$ as input (highlighted in {\color{blue} blue}). In the offline pre-training phase, $\beta_\s$ is randomly sampled from balance coefficient space $\mathcal{B}$. This encourages $\pi_u$ to learn varied strategies. During online fine-tuning, $\beta_\s$ is set by balance model $\pi_b$ before input to universal model $\pi_u$, which prompts cooperation between $\pi_u$ and $\pi_b$.

\subsection{Learning balance model}\label{sec: learning balance model}
Next, we outline how the balance model $\pi_b$ chooses an appropriate policy for each state from the policy family trained by the universal model $\pi_u$. As indicated in \cref{sec: learning balance model}, every $\beta_{\s} \in \mathcal{B}$ corresponds to a unique policy. Consequently, to select the optimal policy, $\pi_b$ needs to determine the appropriate balance coefficient $\beta_\s$ for each state $\s$. Given this rationale, the update rule for $\pi_b$ is:
%
\begin{equation}
    \pi_b^{k+1} = \argmax{\pi_b} \mathbb{E}_{(\s, \act)\sim \mathcal{D}}\Big[Q^{k}(\s, \underbrace{\pi_u^{k+1}(\s, \overbrace{\pi_b(\s)}^{\mathclap{\text{balance coefficient }\beta_\s}}}_{\text{action}})\Big].
    \label{eq: the optimization target of the balance model}
\end{equation}  

Here, $\pi_u^{k+1}$ represents the updated universal model in \cref{eq: the optimization target of the universal model}. Intuitively, \cref{eq: the optimization target of the balance model} aims to find a $\pi_b$ that maximizes $Q^k$ value by translating balance coefficients into actions with $\pi_u^{k+1}$. 
This design is grounded in the understanding that the $Q$ value serves as an estimate of future return, which is our ultimate goal of striking a balance between policy improvement and constraint.
Concerns may arise about $Q^k$'s extrapolation error in \cref{eq: the optimization target of the balance model} potentially misguiding $\pi_b$'s update. Empirical evidence suggests this is less of an issue if we avoid extremely radical values in the balance coefficient space $\mathcal{B}$.
Following the update rule in \cref{eq: the optimization target of the balance model}, $\pi_b$ effectively assigns balance coefficients to states, demonstrated in \cref{sec: toy example}.

\subsection{Learning value functions}\label{sec: learning value functions}
Furthermore, we explain the value functions update. As per \crefrange{eq: the optimization target of the universal model}{eq: the optimization target of the balance model}, a single set of $Q$ and $V$ functions evaluate both $\pi_u$ and $\pi_b$. This is due to $\pi_b: \mathcal{S} \mapsto \mathcal{B}$ and $\pi_u: \mathcal{S}\times \mathcal{B} \mapsto \mathcal{A}$ collectively forming a standard RL policy $\pi_u(\cdot, \pi_b(\cdot)): \mathcal{S}\mapsto \mathcal{A}$. Hence, the value functions update mirrors that in the base algorithm, simply replacing the original policy with $\pi_u(\cdot, \pi_b(\cdot))$.

Finally, we offer a pseudo-code of FamO2O's training process in \cref{sec: pseudo-code}.
%

\section{Experimental Evaluation}
\label{sec: experimental evaluation}
In this section, we substantiate the efficacy of FamO2O through empirical validation. We commence by showcasing its state-of-the-art performance on the D4RL benchmark~\cite{fu2020d4rl} with IQL~\cite{iql} in \cref{sec: comparison on benchmark}. We then evidence its performance improvement's generalizability and statistical significance in \cref{sec: performance improvement bought by FamO2O}. Moreover, \cref{sec: exp on other algorithms} reveals FamO2O's compatibility with non-AWR-style algorithms like CQL~\cite{kumar2020cql}, yielding significant performance enhancement. Lastly, we reserve detailed FamO2O analyses for \cref{sec: discussion} and \cref{appendix: more results}.
For more information on implementation details, please refer to \cref{appendix: FamO2O implementation details}.

\begin{wrapfigure}{r}{0.57\textwidth}
\vspace{-2pt}
    \centering
\includegraphics[width=0.48\textwidth]{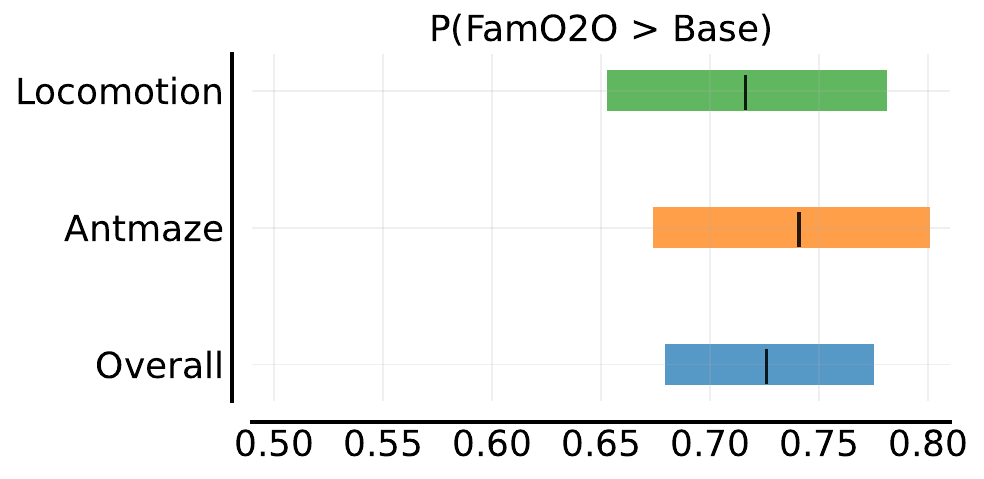}
  \caption{\textbf{FamO2O's improvement over base algorithms~\cite{nair2020awac, iql}.} For D4RL Locomotion, AntMaze~\cite{fu2020d4rl}, and overall, FamO2O shows significant and meaningful performance gains, meeting Neyman-Pearson criteria~\cite{rliable}.}
    \label{fig:prob-of-improvement}
    \vspace{-10pt}
\end{wrapfigure}

\paragraph{Datasets} 
Our method is validated on two D4RL~\cite{fu2020d4rl} benchmarks: Locomotion and AntMaze. Locomotion includes diverse environment datasets collected by varying quality policies. We utilize IQL~\cite{iql} settings, assessing algorithms on hopper, halfcheetah, and walker2d environment datasets, each with three quality levels. AntMaze tasks involve guiding an ant-like robot in mazes of three sizes (umaze, medium, large), each with two different goal location datasets. The evaluation environments are listed in \cref{tab: family-vs-base}'s first column.

\begin{figure}[!t]
    \centering    
    \includegraphics[width=.95\columnwidth]{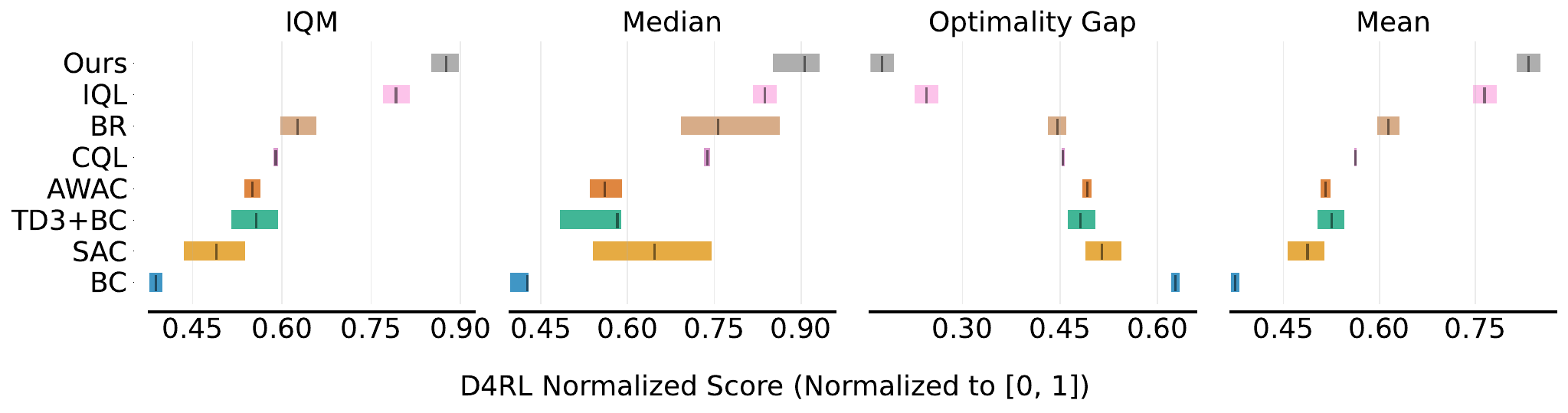}
    \vspace{-5pt}
    \caption{\textbf{Comparisons between our FamO2O against various competitors} on D4RL normalized scores~\cite{fu2020d4rl}.
    All methods are tested on D4RL Locomotion and AntMaze for 6 random seeds. 
    FamO2O achieves state-of-the-art performance by a statistically significant margin among all the competitors in offline-to-online RL (\textit{i.e.} IQL~\cite{iql}, Balaned Replay (BR)~\cite{lee2022pessimistic}, CQL~\cite{kumar2020cql}, AWAC~\cite{nair2020awac}, and TD3+BC~\cite{td3+bc}), online RL (\textit{i.e.} SAC~\cite{sac}), and behavior cloning~(BC).
    }
    \vspace{-5pt}
    \label{fig: compare-with-baseline}
\end{figure}

{
\setlength\fboxsep{2pt}
\begin{table*}[!t]
\begin{threeparttable}
  \small
  \caption{\textbf{Enhanced performance achieved by FamO2O after online fine-tuning.}
We evaluate the D4RL normalized score~\cite{fu2020d4rl} of standard base algorithms (including AWAC~\cite{nair2020awac} and IQL~\cite{iql}, denoted as "Base") in comparison to the base algorithms augmented with FamO2O (referred to as "Ours").
All results are assessed across 6 random seeds.
The superior offline-to-online scores are highlighted in blue.
\textbf{FamO2O consistently delivers statistically significant performance enhancements across different algorithms and task sets.}
  }
  \label{tab: family-vs-base}
  \centering
  \renewcommand{\tabcolsep}{1.7pt}
  \begin{tabular}{l|cc|cc||cc}
    \specialrule{0.12em}{0pt}{0pt}
	\multirow{2}{*}{Dataset\tnote{1}}
	& \multicolumn{2}{c|}{AWAC~\cite{nair2020awac}} 
	& \multicolumn{2}{c||}{IQL~\cite{iql}} 
	& \multicolumn{2}{c}{\textbf{Avg.}}
	\\ 
	\cline{2-7}
	& Base & Ours
	& Base & Ours
	& Base & Ours
	\\ \hline
    hopper-mr-v2
    & 56.0 & \colorbox{myblue}{86.8} & 91.0 & \colorbox{myblue}{97.6} & 73.5 & \colorbox{myblue}{92.2}
    \\
    hopper-m-v2
    & 54.1  & \colorbox{myblue}{75.0} & 65.4 & \colorbox{myblue}{90.7} & 59.7 & \colorbox{myblue}{82.8} 
    \\
    hopper-me-v2
    & \colorbox{myblue}{97.7} & 92.9 & 76.5 & \colorbox{myblue}{87.3} & 87.1 & \colorbox{myblue}{90.1} 
    \\
    halfcheetah-mr-v2
    & 43.9 & \colorbox{myblue}{49.0} & \colorbox{myblue}{53.7} & 53.1 & 48.8 & \colorbox{myblue}{51.0} 
    \\
    halfcheetah-m-v2 
    & 44.8 & \colorbox{myblue}{47.6} & 52.5 & \colorbox{myblue}{59.2} & 48.7 & \colorbox{myblue}{53.4} 
    \\  
    halfcheetah-me-v2 
    & \colorbox{myblue}{91.0} & 90.6 & 92.8 & \colorbox{myblue}{93.1} & \colorbox{myblue}{91.9} & 91.8
    \\ 
    walker2d-mr-v2 
    & 72.8 & \colorbox{myblue}{84.4} & 90.1 & \colorbox{myblue}{92.9} & 81.5 & \colorbox{myblue}{88.6}
    \\ 
    walker2d-m-v2
    & 79.0 & \colorbox{myblue}{80.0} & 83.8 & \colorbox{myblue}{85.5} & 81.4 & \colorbox{myblue}{82.8}
    \\ 
    walker2d-me-v2 
    & \colorbox{myblue}{109.3} & 108.5 & 112.6 & \colorbox{myblue}{112.7} & \colorbox{myblue}{110.9} & 110.6
    \\  
    \hline
    \textbf{locomotion total} 
    & 648.4 & \colorbox{myblue}{714.9} & 718.3 & \colorbox{myblue}{772.0} & 683.4 & \colorbox{myblue}{743.4}
    \\
    \textbf{95\% CIs} 
    & 640.5\textasciitilde656.8 & 667.3\textasciitilde761.4 & 702.5\textasciitilde733.5 & 753.5\textasciitilde788.5 & 674.6\textasciitilde692.0 & 732.1\textasciitilde754.2
    \\ 
    \hline
    umaze-v0
    & 64.0 & \colorbox{myblue}{96.9} & 96.5 & \colorbox{myblue}{96.7} & 80.4 & \colorbox{myblue}{96.8}
    \\ 
    umaze-diverse-v0
    & 60.4 & \colorbox{myblue}{90.5} & 37.8 & \colorbox{myblue}{70.8} & 66.2 & \colorbox{myblue}{80.6}
    \\ 
    medium-diverse-v0
    & 0.2 & \colorbox{myblue}{22.2} & 92.8 & \colorbox{myblue}{93.0} & 45.2 & \colorbox{myblue}{57.6}
    \\ 
    medium-play-v0
    & 0.0 &  \colorbox{myblue}{34.2} & 91.5 & \colorbox{myblue}{93.0} & 45.2 & \colorbox{myblue}{63.6}
    \\ 
    large-diverse-v0
    &  0.0 & 0.0 & 57.5 & \colorbox{myblue}{64.2} & 24.7 & \colorbox{myblue}{32.1}
    \\ 
    large-play-v0
    & 0.0 &  0.0 & 52.5 & \colorbox{myblue}{60.7} & 21.4 & \colorbox{myblue}{30.3}
    \\ 
    \hline
    \textbf{antmaze total} 
    & 124.7 & \colorbox{myblue}{243.7} & 428.7 & \colorbox{myblue}{478.3} & 283.1 & \colorbox{myblue}{361.1}
    \\
    \textbf{95\% CIs}
    & 116.5\textasciitilde132.6 & 226.2\textasciitilde259.9 & 406.7\textasciitilde452.7 & 456.7\textasciitilde498.7 & 274.1\textasciitilde291.1 & 347.2\textasciitilde374.3
    \\
    \hline \hline
    \textbf{total} 
    & 773.0 & \colorbox{myblue}{958.6} & 1146.9 & \colorbox{myblue}{1250.3} & 960.0 & \colorbox{myblue}{1104.5}
    \\
    \textbf{95\% CIs} & 761.5\textasciitilde784.6 & 936.8\textasciitilde979.6 & 1119.5\textasciitilde1175.1 & 1221.9\textasciitilde1277.0 & 911.3\textasciitilde1008.8 & 1063.5\textasciitilde1145.4
    \\
    \specialrule{0.12em}{0pt}{0pt}
  \end{tabular}
  \begin{tablenotes}
            \item[1] mr: medium-replay, m: medium, me: medium-expert.
        \end{tablenotes}
\end{threeparttable}
\vspace{-22pt}
\end{table*}
}

\vspace{-10pt}
\paragraph{Metrics} Considering RL's inherent variability, we adopt robust evaluation methods per rliable~\cite{rliable}. Besides conventional Medium and Mean, we integrate IQM and Optimality Gap metrics for broader assessment. We also employ rliable's probability of improvement metric for gauging the likelihood of our method outperforming others. We confirm our performance enhancement's statistical significance using 95\% Confidence Intervals (CIs).




\subsection{Benchmark Comparison}
\label{sec: comparison on benchmark}

FamO2O's state-of-the-art performance is demonstrated by implementing it over IQL~\cite{iql} and comparing with baseline methods.

\paragraph{Baselines} We benchmark FamO2O against:
\begin{inlinelist}
    \item \textbf{offline-to-online RL}, including IQL~\cite{iql}, Balanced Replay (BR)~\cite{lee2022pessimistic}, CQL~\cite{kumar2020cql}, AWAC~\cite{nair2020awac}, and TD3+BC~\cite{td3+bc}.
    For IQL, BR, and AWAC, official implementations are used. 
    In the case of CQL and TD3+BC, we implement online fine-tuning based on the author-provided offline pre-training codes, following the procedures in IQL and AWAC;
    \item \textbf{online RL method}, SAC~\cite{sac}, to highlight offline pre-training's efficacy;
    \item \textbf{behavior cloning (BC)}, which is implemented by maximizing the log-likelihood of the samples in the offline dataset.
\end{inlinelist}
For SAC and BC, we utilize the implementations of CQL.
Further details are in \cref{appendix: baseline implementation}.

\begin{figure}[!t]
\centering
\begin{minipage}[t]{0.39\textwidth}
\centering
\includegraphics[width=0.95\columnwidth]{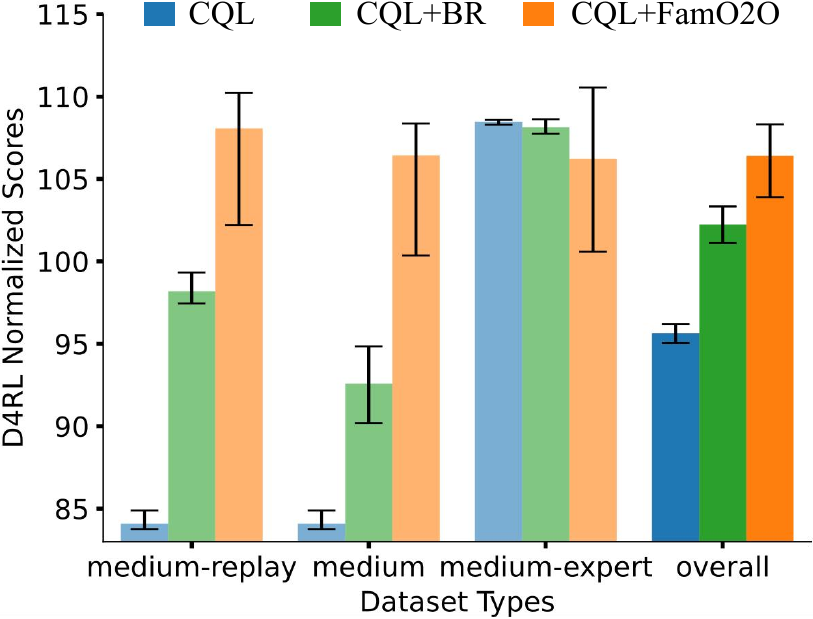}
\vspace{-5pt}
    \caption{\textbf{IQM scores of FamO2O's implementation on top of CQL~\cite{kumar2020cql}.} Demonstrating a statistically significant superiority over vanilla CQL~\cite{kumar2020cql} and CQL+BR~\cite{lee2022pessimistic}, FamO2O affirms its \textbf{adaptability to non-AWR algorithms}.}
    \label{fig: FamO2O on CQL}
\end{minipage}
\ \ \ 
\begin{minipage}[t]{0.58\textwidth}
\centering
    \includegraphics[width=1.0\columnwidth]{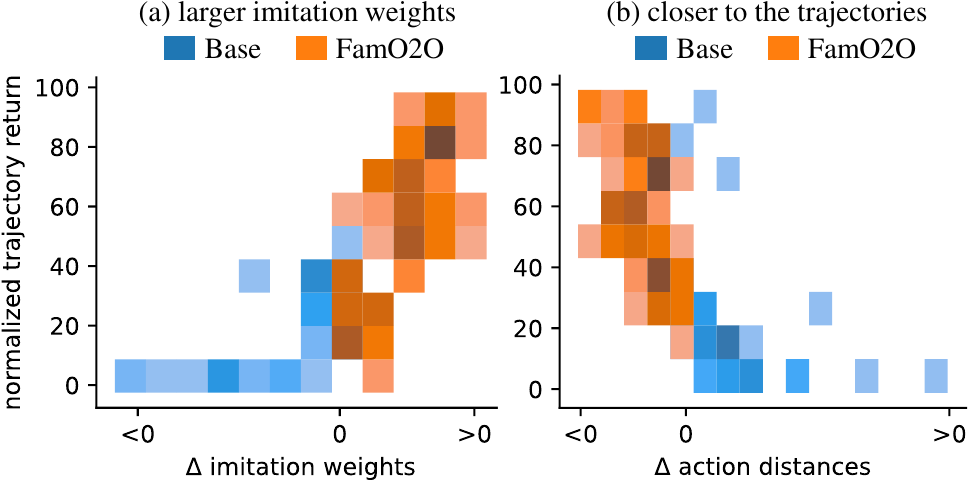}
    \vspace{-15pt}
    \caption{Comparing FamO2O with the base algorithm~\cite{iql} on \textbf{(a) imitation weights} and \textbf{(b) action distances}. $\Delta$ indicates the difference between FamO2O's metrics and the base's. Generally, FamO2O \textbf{emphasizes high-quality data imitation} and \textbf{aligns more closely with high-quality trajectories} compared to the base algorithm.}
    \label{fig:effect-of-state-adaptive-policy-constraint}
\end{minipage}
\vspace{-8pt}
\end{figure}

\vspace{-10pt}
\paragraph{Comparison}
As shown in \cref{fig: compare-with-baseline}, FamO2O outperforms competitors across all metrics (IQM, Medium, Optimality Gap, and Mean). Specifically, for IQM, Optimality Gap, and Mean, FamO2O's 95\% CIs don't overlap with the competitors'. Even for Medium, all baseline expectations fall below the lower limit of FamO2O's 95\% CIs. The results underscore the significant edge of our state-adaptive policy constraint mechanism over competing methods.

\subsection{Analyzing FamO2O's Performance Enhancement}
\label{sec: performance improvement bought by FamO2O}

Though FamO2O demonstrated superior performance on the D4RL benchmark in \cref{sec: comparison on benchmark}, it's vital to discern the actual contributions of FamO2O from its base policy, IQL. Therefore, we address two key questions:
\begin{inlinelist}
\item Does FamO2O consistently enhance other offline-to-online RL algorithms?
\item Is the performance boost by FamO2O statistically significant given RL's inherent variability?
\end{inlinelist}

\vspace{-10pt}
\paragraph{Setup}
We apply FamO2O to AWAC and IQL. AWAC~\cite{nair2020awac} is one of the most famous offline-to-online algorithms, and IQL~\cite{iql} is a recently proposed method that achieves great performance on D4RL~\cite{fu2020d4rl}. We use the authors' codes, maintaining the same hyperparameters for a fair comparison. Further details are in \cref{appendix: FamO2O implementation details}.

\vspace{-10pt}
\paragraph{Comparison}
\Cref{tab: family-vs-base} shows AWAC's and IQL's performances w/ and w/o FamO2O. FamO2O~generally enhances performance by a statistically significant margin across most datasets, regardless of the base algorithm, highlighting its versatility. 
Even on complex datasets where~AWAC barely~succeeds, \textit{e.g.}, AntMaze medium-diverse and medium-play, FamO2O still achieves commendable~performance.

Pursuing rliable's recommendation~\cite{rliable}, we evaluated FamO2O's statistical significance by calculating average probabilities of improvement against base policies (\cref{fig:prob-of-improvement}). In all three cases (Locomotion, AntMaze, and Overall), the lower CI bounds exceed 50\%, denoting the statistical significance of FamO2O's improvement. Specifically, the upper CI on Locomotion surpasses 75\%, demonstrating statistical meaning as per the Neyman-Pearson criterion.

\subsection{Versatility of FamO2O with Non-AWR Algorithms}\label{sec: exp on other algorithms}
To demonstrate FamO2O's versatility beyond AWR-based algorithms, we extended it to CQL~\cite{kumar2020cql} in addition to AWAC~\cite{nair2020awac} and IQL~\cite{iql}. The implementation specifics are in \cref{sec: FamO2O extension}. As \cref{fig: FamO2O on CQL} reveals, FamO2O significantly outperforms CQL. Even when compared to Balance Replay (BR)~\cite{lee2022pessimistic}, an offline-to-online method designed specifically for CQL, FamO2O still shows statistically significant superior performance. These results highlight FamO2O's adaptability to non-AWR algorithms.

\section{Discussion}
\label{sec: discussion}
In this section, we further provide some in-depth studies on FamO2O, including visualization (\cref{sec: toy example}) and quantitative analyses (\crefrange{sec:effect-of-state-adaptive-policy-constraint}{sec:ablation-on-random-betas}).
More analyses are deferred to \cref{appendix: more results}.

\subsection{Does FamO2O really have state-wise adaptivity?}\label{sec: toy example}

\begin{wrapfigure}{r}{0.523\textwidth}
    \centering
    \vspace{-10pt}    
    \includegraphics[width=0.523\textwidth]{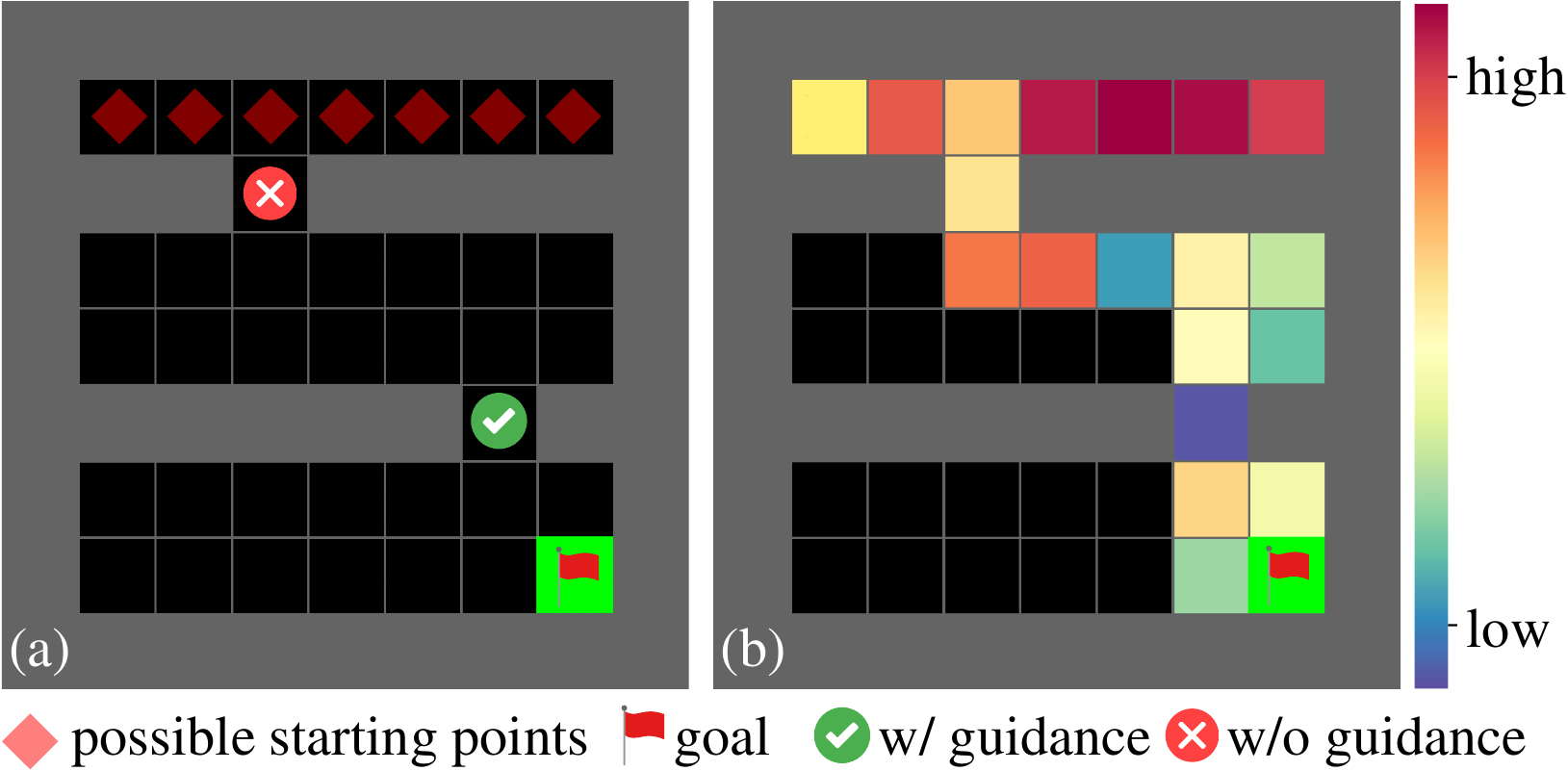}
    \vspace{-15pt}
    \caption{
\textbf{State-wise adaptivity visualization} in a simple maze environment. (a) Higher data quality at the crossing point in the 5th row compared to the 2nd row. (b) Colors denote different balance coefficient values at traversed cells during inference. FamO2O typically displays conservative (or radical) behavior at cells with high-quality (or low-quality) data.}
    \label{fig:toy-example}
    \vspace{-5pt}
\end{wrapfigure}

\vspace{-5pt}
Here, we design a simple maze environment to visualize the state-wise adaptivity of FamO2O.
As shown in \cref{fig:toy-example}(a), the agent starts at a random cell in the top row and is encouraged to reach the goal at the bottom right corner through two crossing points. During data collection, guidance is provided when the agent passes through the lower crossing point, but no guidance for the upper crossing point.
To elaborate, the guidance refers to compelling the agent to adhere to the route and direction that yields the shortest path to the goal. Without it, the agent moves randomly.
It can be observed in \cref{fig:toy-example}(b) that the agent generally outputs lower balance coefficients for the states with high-quality samples (\textit{i.e.}, those derived from the agent's movement with guidance) while generating higher balance coefficients for the states with low-quality data (\textit{i.e.}, data gathered when the agent moves without guidance).
This result shows FamO2O's state-wise adaptivity in choosing proper balance coefficients according to the data quality related to the current state.

\subsection{What is the effect of state-adaptive balances?}\label{sec:effect-of-state-adaptive-policy-constraint}
\vspace{-5pt}
In this section, we explore the impact of state-adaptive improvement-constraint balance. In order to encompass data of varying quality levels, we assess FamO2O using the medium-replay datasets of D4RL Locomotion~\cite{fu2020d4rl}. Our analysis focuses on two metrics: \textit{imitation weights} and \textit{action distances}.

\emph{Imitation weights} are defined in \cref{eq: training objective}, with larger (or smaller) values prompting the agent to align more closely (or less) with the replay buffer $\mathcal{D}$'s behavior. \emph{Action distance}, delineated in \cref{eq: action distance}, quantifies the discrepancy in action selection between a policy $\pi$ and a trajectory $\tau$:
%
\begin{equation}
    d_{\text{action}}^{\pi, \tau} = \mathbb{E}_{(\s, \act) \sim \tau}\big[ \| \argmax{\act^\prime}\ \pi(\act^\prime|\s) - \act\|^2_2 \big].
    \label{eq: action distance}
\end{equation}
%
Here, a lower (or higher) action distance $d_{\text{action}}^{\pi, \tau}$ signifies a greater (or lesser) behavioral similarity between the policy $\pi$ and the trajectory $\tau$.

We evaluate IQL with and without FamO2O (`Base' and 'FamO2O') regarding imitation weights and action distances, depicted in \cref{fig:effect-of-state-adaptive-policy-constraint}. \cref{fig:effect-of-state-adaptive-policy-constraint}(a) computes the average imitation weight difference (AIWD) per trajectory in the offline dataset. AIWD indicates the mean imitation weight difference between FamO2O and the base algorithm for each $(s, a)$ pair within a trajectory. \cref{fig:effect-of-state-adaptive-policy-constraint}(b) likewise determines an average action distance difference per offline dataset trajectory.

\cref{fig:effect-of-state-adaptive-policy-constraint}(a) reveals that FamO2O typically shows higher imitation weights than the base algorithm for high-return trajectories. \cref{fig:effect-of-state-adaptive-policy-constraint}(b) indicates that FamO2O aligns more with high-quality trajectories and less with low-quality ones than the base algorithm. These results highlight the state-adaptive balance's role in promoting emulation of high-quality behavior and avoidance of low-quality behavior.

\subsection{State-adaptive balances \textit{vs.} fixed balances?}
\label{sec:ablation-on-fixed-betas}

\begin{figure}[!t]
\centering
\begin{minipage}[t]{0.56\textwidth}
\centering
\includegraphics[width=1\textwidth]{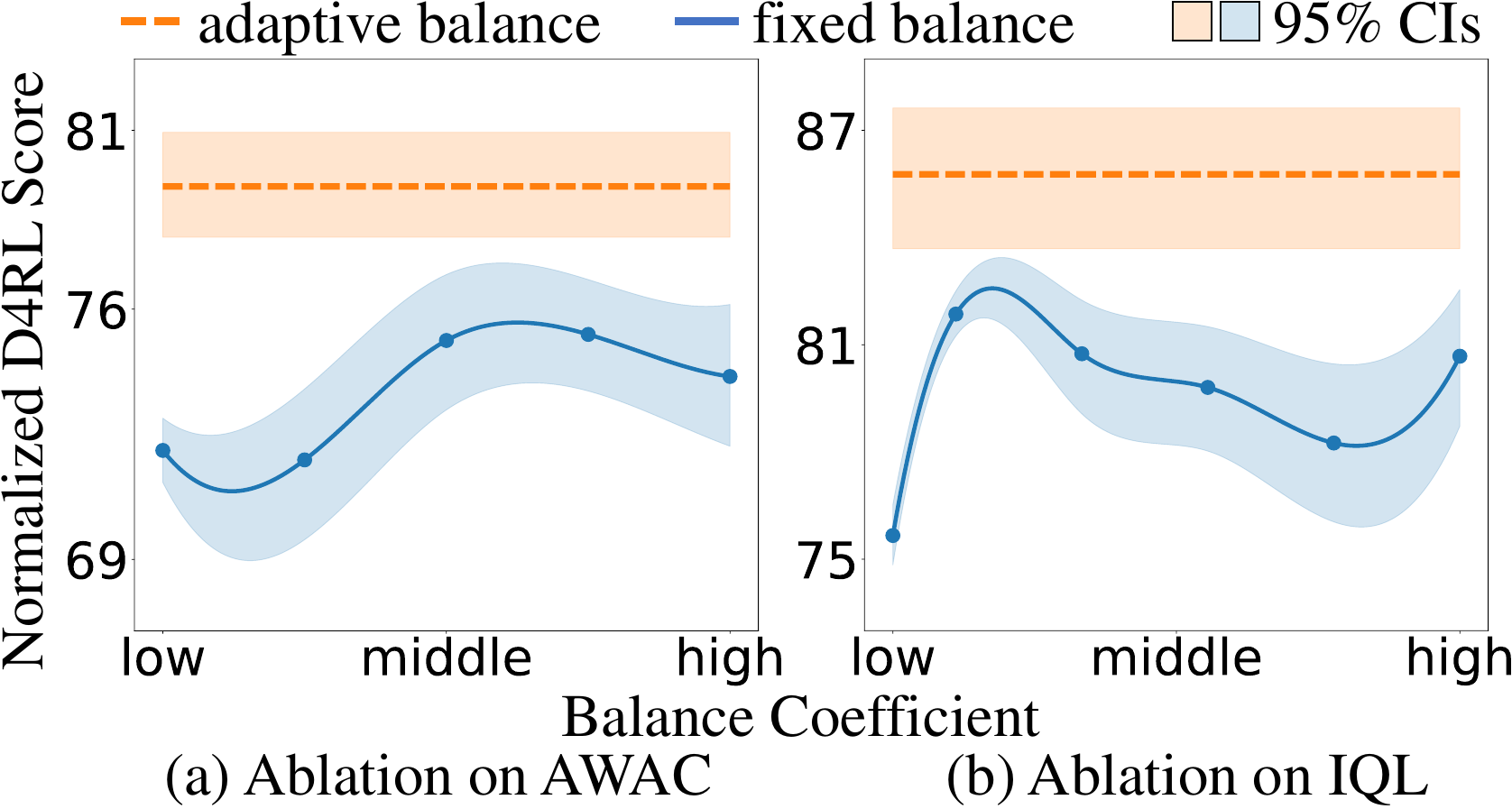}
    \vspace{-15pt}
    \caption{
\textbf{Comparing adaptive and fixed balance coefficients} on (a) AWAC~\cite{nair2020awac} and (b) IQL~\cite{iql} using D4RL Locomotion datasets~\cite{fu2020d4rl}. Adaptive coefficients consistently outperform fixed ones with distinct 95\% CIs.
    }
    \label{fig: fixed-constraint-degree}
    \vspace{-15pt}
\end{minipage} \ \ \ \ \ 
\begin{minipage}[t]{0.37\textwidth}
    \centering
    \includegraphics[width=1\textwidth]{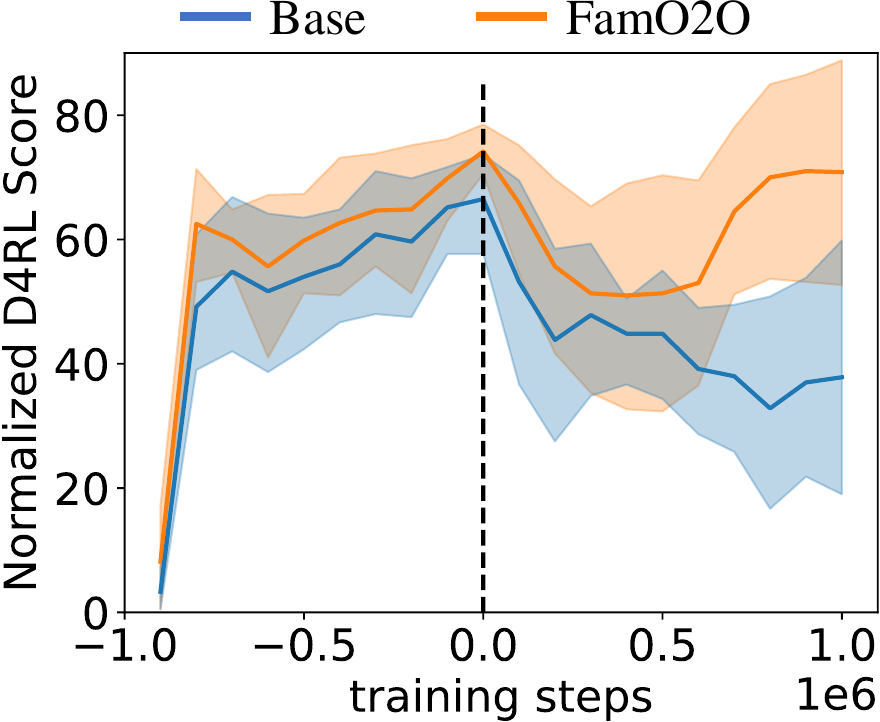}
    \vspace{-15pt}
    \caption{FamO2O \textbf{alleviates performance drop} due to \textbf{distributional shift} during the shift from offline pre-training to online~fine-tuning.}
    \label{fig: distributional shift handling}
\end{minipage}
\vspace{-5pt}
\end{figure}

\vspace{-5pt}
To prove that our adaptive balance coefficients are better than traditional fixed balance coefficients, we compare FamO2O against the base algorithms with different balance coefficients as hyperparameters.
As shown in \cref{fig: fixed-constraint-degree}, on both AWAC~\cite{nair2020awac} and IQL~\cite{iql}, our adaptive balance coefficients outperform all the fixed balance coefficients.
Significantly, the 95\% CIs of adaptive and fixed balance coefficients have no overlap.
These comparison results indicate that our adaptive balance coefficient approach surpasses the fixed balance coefficient method by a statistically significant margin.

\subsection{Does FamO2O's efficacy rely on varied data qualities?}

\begin{wrapfigure}{r}{0.58\columnwidth}
    \centering
    \vspace{-10pt}
    \includegraphics[width=0.58\columnwidth]{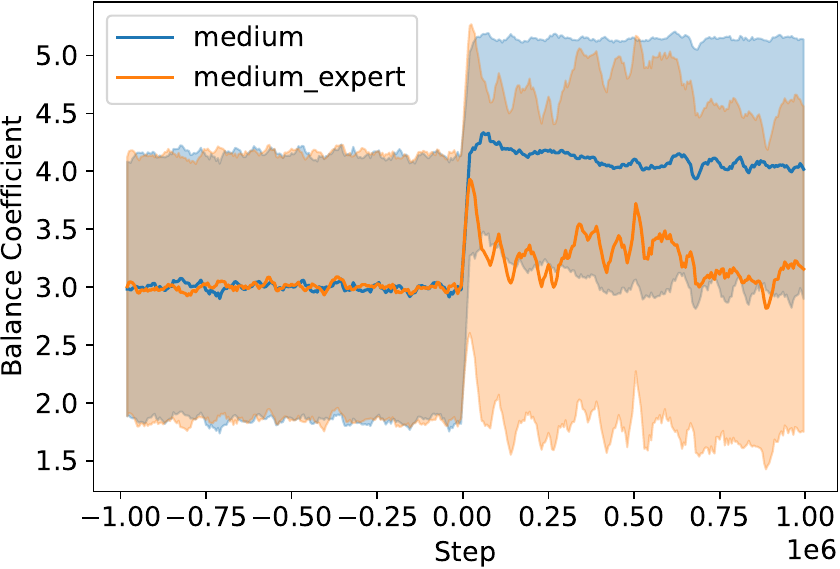}
    \vspace{-15pt}
    \caption{Balance coefficients' mean and std for IQL w/ FamO2O on D4RL HalfCheetah.}
    \label{fig: means and std of balance coefficients}
    \vspace{-5pt}
\end{wrapfigure}

It's worth noting that our method's efficacy doesn't rely on varied data qualities.
Table~\ref{tab: family-vs-base} clearly demonstrates that FamO2O surpasses the base algorithms in performance across all the medium datasets, namely hopper-medium-v2, halfcheetah-medium-v2, and walker2d-medium-v2, which all maintain a relatively consistent data quality.
We claim that FamO2O can determine the suitable conservative/radical balance for each state in online scenarios based on the data quality in the collected dataset. 
If the dataset is diverse in quality, the balances will be diverse; if the quality is consistent, the balances will be correspondingly consistent.
The above claim is supported by Figure~\ref{fig: means and std of balance coefficients}, which indicates that 
\begin{inlinelist}
    \item in datasets with more (or less) diverse data qualities, i.e., medium-expert (or medium), the balance coefficients are more (or less) diverse, with corresponding larger (or smaller) standard deviations;
    \item with higher (or lower) quality datasets, the balance coefficients are averagely lower (or higher), leading to a more conservative (or radical) policy.
\end{inlinelist}

\subsection{Does FamO2O mitigate the performance drop stemming from the distributional shift?}
\vspace{-5pt}
Despite FamO2O's primary objective not being direct distributional shift handling, its state-adaptive improvement-constraint balances prove beneficial in mitigating performance degradation during the offline pre-training to online fine-tuning transition, attributed to the distributional shift.
Existing offline-to-online RL algorithms~\cite{iql, nair2020awac} already incorporate mechanisms to counter distributional shift, hence significant performance drops are infrequent.
\cref{fig: distributional shift handling} illustrates the training curves of IQL and IQL+FamO2O across both offline pre-training (negative steps) and online fine-tuning (positive steps) on antmaze-umaze-diverse, on which IQL exhibits the most significant decline in performance when transitioning from offline pre-training to online fine-tuning. 
As evidenced, while IQL+FamO2O initially experiences a performance drop akin to IQL during online fine-tuning, it recovers rapidly and attains high performance, in stark contrast to IQL's sustained performance decline throughout the fine-tuning stage.

\subsection{Balance model \textit{vs.} random selector?}
\label{sec:ablation-on-random-betas}
\vspace{-5pt}

\begin{wraptable}{r}{0.52\textwidth}
\begin{threeparttable}
  \vspace{-28pt}
  \small
  \caption{\textbf{Ablation study on the balance model.} The absence of the balance model results in decreased performance of FamO2O on nearly all D4RL Locomotion datasets~\cite{fu2020d4rl}. The model's impact is statistically significant and meaningful as per the Neyman-Person criterion.
  }
  \centering
    \renewcommand{\tabcolsep}{2.7pt}
  \begin{tabular}{lcc}
    \specialrule{0.12em}{0pt}{0pt}
	Dataset 
	& FamO2O 
	& random-FamO2O
	\\ \hline
    hopper-mr-v2
    &  \colorbox{myblue}{97.64} 
    & 80.87
    \\
    hopper-m-v2
    & \colorbox{myblue}{90.65}
    & 86.37
    \\
    hopper-me-v2
    & \colorbox{myblue}{87.28}
    & 77.08
    \\
    halfcheetah-mr-v2
    & 53.07  
    & \colorbox{myblue}{53.75}
    \\ 
    halfcheetah-m-v2
    & \colorbox{myblue}{59.15}
    & 53.15
    \\ 
    halfcheetah-me-v2
    &  \colorbox{myblue}{93.10}
    & 92.72
    \\ 
    walker2d-mr-v2
    & \colorbox{myblue}{92.85}
    & 91.38
    \\ 
    walker2d-m-v2
    & \colorbox{myblue}{85.50}
    & 84.84
    \\ 
    walker2d-me-v2
    & \colorbox{myblue}{112.72}
    & 110.54
    \\ 
    \hline
    \textbf{total} 
    & \colorbox{myblue}{\textbf{771.96}}
    & \textbf{730.70} 
    \\
    \textbf{prob. of improvement} & \multicolumn{2}{c}{\textbf{0.70}\ \ (95\% CIs: \textbf{0.59}$\sim$\textbf{0.79})} \\
    \specialrule{0.12em}{0pt}{0pt}
  \end{tabular}
  \label{tab:random-FamO2O}
  \vspace{-5pt}
  \end{threeparttable}
\end{wraptable}

To validate the effect of the balance model $\pi_b$ in choosing balance coefficients, we present a FamO2O variant, denoted as \textit{random-FamO2O}, with the balance model replaced with a random balance coefficient selector.
Other training settings keep the same for FamO2O and random-FamO2O.
\cref{tab:random-FamO2O} shows the improvement percentages and probability of improvement of FamO2O against random-FamO2O.
As we can observe, FamO2O outperforms random-FamO2O on almost all the datasets of D4RL Locomotion~\cite{fu2020d4rl}.
Furthermore, the lower CI of the probability of improvement is much higher than 50\%, and the upper CI exceeds 75\%.
This indicates that the effect of the balance model is not only statistically significant but also statistically meaningful as per the Neyman-Person statistical testing criterion.

\section{Conclusion}\label{sec: conclusion}
\vspace{-5pt}
This work underscores the significance of state-adaptive improvement-constraint balances in offline-to-online RL. We establish a theoretical framework demonstrating the advantages of these state-adaptive balance coefficients for enhancing policy performance. Leveraging this analysis, we put forth Family Offline-to-Online RL (FamO2O), a versatile framework that equips existing offline-to-online RL algorithms with the ability to discern appropriate balance coefficients for each state.

Our experimental results, garnered from a variety of offline-to-online RL algorithms, offer substantial evidence of FamO2O's ability to significantly improve performance, attaining leading scores on the D4RL benchmark. In addition, we shed light on FamO2O's adaptive computation of state-adaptive improvement-constraint balances and their consequential effects through comprehensive analyses.
Ablation studies on the adaptive balances and balance model further corroborate FamO2O's efficacy. 

The limitation of our work is that FamO2O has been evaluated on just a handful of representative offline-to-online RL algorithms, leaving a vast array of such algorithms unexamined. Additionally, FamO2O’s utility is somewhat limited, as it is applicable exclusively to offline-to-online algorithms.
In future work, we aim to expand the applicability of FamO2O by integrating it with a broader spectrum of offline-to-online algorithms. Additionally, we intend to investigate the feasibility of incorporating state-adaptive improvement-constraint balances in offline RL settings, where online feedback is intrinsically absent, or even applying our adaptive design to the LLM-as-agent domain~\cite{wang2023survey, wang2023avalon}.

\section*{Acknowledgement}
This work was supported in part by the Key-Area Research and Development Program of Guangdong Province under Grant 2020B1111500002, the National Natural Science Foundation of China under Grants 62022048 and 62276150, and the Guoqiang Institute of Tsinghua University.

\newpage
{
\small
\bibliographystyle{plain}
\bibliography{ref.bib}
}

\newpage
\appendix
\begin{appendices}
\etocdepthtag.toc{mtappendix}
\etocsettagdepth{mtchapter}{none}
\etocsettagdepth{mtappendix}{subsection}
\begin{spacing}{0.5}
\tableofcontents
\end{spacing}

\section{Related Work}
\paragraph{Offline RL} 
Reinforcement learning (RL) usually improves a policy through continuous online interactions with the environment ~\cite{sutton1998, dqn, yue2023vcr}. 
To reduce the huge demand for online interactions, especially when they are costly or risky, researchers proposed offline RL that utilizes pre-collected datasets to improve the policy without interacting with the environment~\cite{Sergey2020tutorial}.
Directly applying off-policy algorithms in offline RL usually leads to poor performance. This phenomenon is due to \textit{distributional shift}, where the agent may learn inaccurate value estimates of out-of-distribution (OOD) state-action pairs~\cite{Sergey2020tutorial, yue2022boosting, yue2023offline}.
%
%
Existing algorithms address this issue via policy constraints~\cite{fujimoto2019BCQ, kumar2019stabilizing, wu2019behavior, Kostrikov2021fisher-BRC, awr, nair2020awac}, importance sampling~\cite{jiang2016doubly, hallak2017consistent, gendice2020}, regularization~\cite{algaedice2019, kumar2020cql}, uncertainty estimation~\cite{agarwal2020optimistic}, and imitation learning~\cite{chen2020bail, keepdoing2020, wang2020critic}.

\paragraph{Offline-to-online RL} 
Offline-to-online RL adds online fine-tuning to offline RL to enhance policy performance. 
Similar to the distributional shift problem in offline RL, offline-to-online (O2O) RL also suffers from off-policy bootstrapping error accumulation caused by OOD data, which causes a large ``dip" in the initial performance of online fine-tuning ~\cite{nair2020awac}. Moreover, O2O RL algorithms tend to be excessively conservative and result in plateauing performance improvement \cite{nair2020awac}. This emphasizes the importance of finding a balance between being optimistic about improving the policy during the online phase and still being constrained to the conservative offline policy. However, as previously discussed, existing O2O RL algorithms generally adopt a “one-size-fits-all” trade-off, either in the process of Q-learning~\cite{kumar2020cql,lyu2022mildly,iql,mark2022finetune}, along with the policy improvement~\cite{nair2020awac}, via a revised replay buffer~\cite{lee2022pessimistic}, or through some alignment approaches~\cite{wu2022supported,anonymous2023actorcritic}.
It's worth noting that, \cite{td3+bc+finetune} also proposes an adaptive weight in online fine-tuning.
However, the adaptive weight is a single parameter for all the samples. In addition, its update requires an extra human-defined target score, which is absent in the setting considered by our competitors and us.
Furthermore, \cite{PEX} presents a policy expansion strategy that adaptively selects a policy from a set, which includes both offline pre-trained policies and online learned policies.
Nevertheless, this policy set comprises only two policies, in contrast to the infinite number of policies contained in FamO2O's policy family.

\paragraph{Conditioned RL policy} Algorithms in this scheme can be divided into two stages:
\begin{inlinelist}
\item a set of policies is trained;
\item the policy with the highest performance is selected.
\end{inlinelist}
Generally, these methods learn a family of policies that exhibit different behaviors and search for the best one that successfully executes the desired task~\cite{RoboImitationPeng20,Yu2019}.
A similar method to our FamO2O is UP-OSI ~\cite{UP-OSI2017}, where a Universal Policy (UP) within a parameterized space of dynamic models and an Online System Identification (OSI) function that predicts dynamic model parameters based on recent state and action history are proposed. 
Unlike these prior methods utilizing models to identify the environmental parameters, the balance model in our method aims to determine a proper balance between policy improvement and constraints for each state.
It's worth noting that \cite{hong2023confidenceconditioned, swazinna2023userinteractive, ghosh2022offline} also exhibit conditioned designs. 
However, \cite{hong2023confidenceconditioned} focuses on conditioned value functions and pure offline RL; \cite{swazinna2023userinteractive} relies on human interactions; \cite{ghosh2022offline} concentrates on adjusting offline RL policies according to the estimated probabilities of various MDPs in uncovered areas, but uncoverage might not be a primary concern in offline-to-online RL due to the accessibility to the online environment.
Moreover, \cite{yang2023hundreds, yang2023boosting} both exhibit a comparable understanding of how to adjust policy improvement and policy constraint across samples of varying quality.
Nevertheless, the focus of \cite{yang2023hundreds, yang2023boosting} is on offline RL. Specifically, \cite{yang2023hundreds} requires a small number of expert demonstrations, while \cite{yang2023boosting} necessitates annotated preferences to appropriately calibrate the balance for each individual sample.
Additionally, in areas such as computer vision, there are existing works that condition their training and/or inference on various input samples~\cite{han2022learning, han2022latency, pu2023adaptive, pu2023fine, pu2023rank, Wang_2021_CVPR}. These studies align with the adaptive approach in our work.

\paragraph{Hierarchical RL} Hierarchical Reinforcement Learning (HRL) decomposes complex tasks into manageable subtasks, enhancing sample efficiency, interpretability, and generalization in RL. HRL approaches bifurcate into predefined and learned options (or skills). Predefined options, guided by domain knowledge or human demonstrations, shape high-level policy or constrain action space, exemplified by MAXQ~\cite{maxQ}, options framework~\cite{options-framework}, skill chaining~\cite{skill-chain}, and feudal RL~\cite{dayan1992feudal}. Learned options, optimizing objectives like entropy or reward, are seen in option-critic, skill-discovering~\cite{skill-chain}, feudal networks~\cite{vezhnevets2017feudal}, HEXQ~\cite{HEXQ}, DIAYN~\cite{eysenbach2018diversity}, HIRO~\cite{nachum2018data}, and HIDIO~\cite{zhang2021hierarchical}. While FamO2O shares some design elements with HRL, such as hierarchical policy decision-making, the objectives diverge: FamO2O focuses on balancing policy improvement and constraints per state, while HRL seeks to simplify complex tasks via goals or options.

\section{Training Process of FamO2O}    \label{sec: pseudo-code}
Here we provide the pseudo-codes of FamO2O (see \cref{alg: algos-FamO2O}) to demonstrate its training process.

\begin{algorithm}[!h]
\caption{FamO2O Algorithm} \label{alg: algos-FamO2O}
\begin{algorithmic}[1]
    \Require  replay buffer $\mathcal{D}$, offline dataset $\mathcal{D}_{\text{offline}}$, offline and online training steps $N_{\text{off}}, N_{\text{on}}$
    %
    \State Initialize $\pi_b^0, \pi_u^0, Q^0, V^0$
    \State Initialize $\mathcal{D}$ with $\mathcal{D}_{\text{offline}}$
    \For{$k = 0 \to N_{\text{off}}-1$} \Comment{offline pre-training phase}
        \State Sample a mini-batch $M_k$ from $\mathcal{D}$
        \State Assign a random balance coefficient for each sample in $M_k$, denoting the balance coefficient set as $B_{M_k}$
        \State Update $\pi_u^k$ to $\pi_u^{k+1}$ with $M_k, B_{M_k}$ by \cref{eq: the optimization target of the universal model}
        \State Update $\pi_b^k$ to $\pi_b^{k+1}$ with $M_k$ by \cref{eq: the optimization target of the balance model}
        \State Update $Q^k, V^k$ to $Q^{k+1}, V^{k+1}$ respectively with  $M_k$ according to the base algorithm
    \EndFor
    \For{$k = N_{\text{off}} \to N_{\text{on}}-1$} \Comment{online fine-tuning phase}
        \State Collect samples with $\pi_u^k, \pi_b^k$ and add samples to $\mathcal{D}$ 
        \State Sample a mini-batch $M_k$ from $\mathcal{D}$
        \State $\pi_b^k$ computes a balance coefficient for each sample in $M_k$. Denote the balance coefficient set as $B_{M_k}$
        \State Update $\pi_u^k$ to $\pi_u^{k+1}$ with $M_k, B_{M_k}$ by \cref{eq: the optimization target of the universal model}
        \State Update $\pi_b^k$ to $\pi_b^{k+1}$ with $M_k$ by \cref{eq: the optimization target of the balance model}
        \State Update $Q^k, V^k$ to $Q^{k+1}, V^{k+1}$ respectively with  $M_k$ according to the base algorithm 
    \EndFor
\end{algorithmic}
\end{algorithm}

\section{Theoretical Proofs}
\subsection{Proof of \cref{prop: point-wise superiority}}~\label{appendix: proof of point-wise superiority}
\begin{lemma}\label{lemma: connection between distributional and point-wise constraints}
    \cref{eq: distributional constraint} is equivalent to:
\begin{align}
    &\exists \{\epsilon_\s, \s\in\mathcal{S}\} \in C, \quad \forall \s \in \mathcal{S}, \quad  D_{\mathrm{KL}}(\pi(\cdot | \s) \| \pi_{\beta}(\cdot | \s)) \le \epsilon_\s, \label{eq: equivalent equation 1}\\
    &\text{where} \ C = \left\{\left\{\epsilon_\s^\prime, \s \in \mathcal{S}\right\} | \int_{\s \in\mathcal{S}} d_{\pi_\beta}(\s) \epsilon_\s^\prime \diff\s = \epsilon, \epsilon_\s^\prime\ge 0 \right\}. \label{eq: equivalent equation 2}
\end{align}
\end{lemma}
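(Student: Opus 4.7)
The plan is to show the two directions of the claimed equivalence separately, treating it as a straightforward exchange between a single integral inequality and a family of pointwise inequalities coupled by a normalization condition. Throughout I will write $D(\s) \defeq D_{\mathrm{KL}}(\pi(\cdot|\s)\|\pi_\beta(\cdot|\s))$ for brevity, and I will use the fact that $d_{\pi_\beta}$ is a probability density, so $\int_\mathcal{S} d_{\pi_\beta}(\s)\diff\s = 1$ and $D(\s) \ge 0$.

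For the direction \cref{eq: distributional constraint} $\Rightarrow$ \crefrange{eq: equivalent equation 1}{eq: equivalent equation 2}, I will explicitly construct a witness family $\{\epsilon_\s\}$. Letting $\delta \defeq \epsilon - \int_\mathcal{S} d_{\pi_\beta}(\s) D(\s)\diff\s \ge 0$ by hypothesis, I will simply set $\epsilon_\s \defeq D(\s) + \delta$. This is nonnegative, satisfies $D(\s)\le \epsilon_\s$ pointwise, and integrates against $d_{\pi_\beta}$ to exactly $\epsilon$ because $d_{\pi_\beta}$ normalizes to $1$. Thus $\{\epsilon_\s\}\in C$ and \cref{eq: equivalent equation 1} holds.

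For the converse direction, suppose $\{\epsilon_\s\}\in C$ and $D(\s)\le \epsilon_\s$ for every $\s\in\mathcal{S}$. Multiplying by $d_{\pi_\beta}(\s)\ge 0$ and integrating preserves the inequality, so
\begin{equation*}
\int_\mathcal{S} d_{\pi_\beta}(\s) D(\s)\diff\s \;\le\; \int_\mathcal{S} d_{\pi_\beta}(\s) \epsilon_\s\diff\s \;=\; \epsilon,
\end{equation*}
which is exactly \cref{eq: distributional constraint}.

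I do not anticipate a genuine obstacle here: the lemma is essentially a restatement of the trivial fact that a weighted integral inequality with a nonnegative density can always be broken into a pointwise inequality by redistributing the slack uniformly, and conversely any pointwise bound integrates to an aggregate bound. The only subtlety worth pausing on is measurability of the constructed family $\{\epsilon_\s\}$, which is inherited from measurability of $\s\mapsto D(\s)$, and the verification that the shifted function remains nonnegative, which is immediate from $D(\s)\ge 0$ and $\delta\ge 0$. Everything else is routine.
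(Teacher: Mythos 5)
Your proof is correct, and it is both more direct and more complete than the one in the paper. The paper proves only the forward implication, and does so by contradiction: it assumes every admissible family $\{\epsilon_\s\}$ fails at some state, then constructs a witness that sets $\epsilon_\s = D_{\mathrm{KL}}(\pi(\cdot|\s)\,\|\,\pi_\beta(\cdot|\s))$ everywhere except at one arbitrary state $\s_0$, where all of the slack is concentrated, and derives a contradiction with \cref{eq: distributional constraint}. Your construction instead spreads the slack $\delta = \epsilon - \int_{\mathcal{S}} d_{\pi_\beta}(\s)D(\s)\diff\s$ uniformly, which lets you verify membership in $C$ and the pointwise bound by inspection, with no contradiction machinery. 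It also sidesteps a measure-theoretic wrinkle in the paper's argument, which treats the single state $\s_0$ as contributing $d_{\pi_\beta}(\s_0)\epsilon_{\s_0}$ to the integral --- only meaningful if $\{\s_0\}$ carries positive mass. The one thing your route uses that the paper's does not is the normalization $\int_{\mathcal{S}} d_{\pi_\beta}(\s)\diff\s = 1$; this is consistent with $d_{\pi_\beta}$ being a state distribution, and if it integrated to some other constant $c>0$ you would simply replace $\delta$ by $\delta/c$. Finally, you supply the converse direction (integrate the pointwise bound against $d_{\pi_\beta}\ge 0$), which the lemma's ``equivalent to'' phrasing requires but the paper leaves implicit; including it is the right call.
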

\begin{proof}
    We aim to prove \cref{eq: distributional constraint} $\Rightarrow$ \crefrange{eq: equivalent equation 1}{eq: equivalent equation 2} in our manuscript through a contradiction approach. Let's assume that \cref{eq: distributional constraint} $\nRightarrow$ \crefrange{eq: equivalent equation 1}{eq: equivalent equation 2}, signifying the following:
\begin{enumerate}
    \item \textbf{Condition 1} \quad According to \cref{eq: distributional constraint}, $\int_{\mathbf{s}\in \mathcal{S}} d_{\pi_\beta(\mathbf{s})} D_{\text{KL}}(\pi(\cdot|\mathbf{s})\| \pi_\beta(\cdot|\mathbf{s})) \mathrm{d}\mathbf{s} \le \epsilon$;
    \item \textbf{Condition 2} \quad The converse proposition of \crefrange{eq: equivalent equation 1}{eq: equivalent equation 2} is: $\forall \epsilon_{\mathbf{s}}\ge 0\ (\mathbf{s}\in \mathcal{S})$ which satisfy $\int_{\mathbf{s}\in\mathcal{S}}d_{\pi_\beta}(\mathbf{s})\epsilon_{\mathbf{s}}\mathrm{d}\mathbf{s}=\epsilon$, there exists $\mathbf{s}\in\mathcal{S}$, $D_{\text{KL}}(\pi(\cdot|\mathbf{s})\| \pi_\beta(\cdot|\mathbf{s})) > \epsilon_{\mathbf{s}}$.
\end{enumerate}

Next, we form a special set $\{\epsilon_{\mathbf{s}}, \mathbf{s} \in \mathcal{S}\}$. Selecting an arbitrary $\mathbf{s}_0 \in \mathcal{S}$, the set fulfills:
\begin{enumerate}
    \item For all $\mathbf{s}\neq \mathbf{s}_0$, $\epsilon_{\mathbf{s}} = D_{\text{KL}}(\pi(\cdot|\mathbf{s})\| \pi_\beta(\cdot|\mathbf{s}))$,
    \item $\epsilon_{\mathbf{s}_0} = \epsilon - \int_{\mathbf{s}\in\mathcal{S}, \mathbf{s}\neq \mathbf{s}_0} d_{\pi_\beta}(\mathbf{s})D_{\text{KL}}(\pi(\cdot|\mathbf{s}_0)\| \pi_\beta(\cdot|\mathbf{s}_0))\mathrm{d}\mathbf{s}$.
\end{enumerate} 

It can be clearly seen that the set adheres to $\int_{\mathbf{s}\in\mathcal{S}}d_{\pi_\beta}(\mathbf{s})\epsilon_{\mathbf{s}}\mathrm{d}\mathbf{s}=\epsilon$. Based on Condition 2, we can deduce that $D_{\text{KL}}(\pi(\cdot|\mathbf{s}_0)\| \pi_\beta(\cdot|\mathbf{s}_0)) > \epsilon_{\mathbf{s}_0}$. Hence,
\begin{align}
\epsilon = &\quad \int_{\mathbf{s}\in\mathcal{S}}d_{\pi_\beta}(\mathbf{s})\epsilon_{\mathbf{s}}\mathrm{d}\mathbf{s} \nonumber\\
=&\quad d_{\pi_\beta}(\mathbf{s}_0)\epsilon_{\mathbf{s}_0} + \int_{\mathbf{s}\in\mathcal{S}, \mathbf{s}\neq\mathbf{s}_0}d_{\pi_\beta}(\mathbf{s})\epsilon_{\mathbf{s}}\mathrm{d}\mathbf{s} \nonumber\\
<&\quad d_{\pi_\beta}(\mathbf{s}_0)D_{\text{KL}}(\pi(\cdot|\mathbf{s}_0)\| \pi_\beta(\cdot|\mathbf{s}_0)) \nonumber
+ \int_{\mathbf{s}\in\mathcal{S}, \mathbf{s}\neq\mathbf{s}_0}d_{\pi_\beta}(\mathbf{s})D_{\text{KL}}(\pi(\cdot|\mathbf{s})\| \pi_\beta(\cdot|\mathbf{s}))\mathrm{d}\mathbf{s} \nonumber\\
=&\quad \int_{\mathbf{s}\in\mathcal{S}}d_{\pi_\beta}(\mathbf{s})D_{\text{KL}}(\pi(\cdot|\mathbf{s})\| \pi_\beta(\cdot|\mathbf{s}))\mathrm{d}\mathbf{s}.\nonumber
\end{align}

This stands in contradiction to condition 1, where $\int_{\mathbf{s}\in \mathcal{S}} d_{\pi_\beta(\mathbf{s})} D_{\text{KL}}(\pi(\cdot|\mathbf{s})\| \pi_\beta(\cdot|\mathbf{s})) \mathrm{d}\mathbf{s} \le \epsilon$. Thus, the assumption that \cref{eq: distributional constraint} $\nRightarrow$ \crefrange{eq: equivalent equation 1}{eq: equivalent equation 2} is proven false, which confirms that \cref{eq: distributional constraint} $\Rightarrow$ \crefrange{eq: equivalent equation 1}{eq: equivalent equation 2}.
\end{proof}

Denote the feasible region of the problem in \cref{def: distributional constraint optimization problem definition} as $\mathcal{F}^k(\epsilon)$, and the feasible region of the problem in \cref{def: point-wise constraint optimization problem definition} as $\mathcal{F}^k(\{\epsilon_\s, \s \in \mathcal{S}\})$.
According to \cref{lemma: connection between distributional and point-wise constraints}, we can infer that:
\begin{equation}
    \mathcal{F}^k(\epsilon) = \bigcup_{\{\epsilon_\s, \s \in \mathcal{S}\} \in C } \mathcal{F}^k(\{\epsilon_\s, \s \in \mathcal{S}\}).
\end{equation}
Considering that the problems in \cref{def: point-wise constraint optimization problem definition} and \cref{def: distributional constraint optimization problem definition} shares the same objective function, we have:
\begin{equation}~\label{eq: feasible region}
    \forall \epsilon \ge 0, \quad \exists \{\epsilon_\s, \s \in \mathcal{S}\} \in C,  \quad  \pi^{k+1}_*[\epsilon] \in \mathcal{F}^k(\{\epsilon_\s, \s \in \mathcal{S}\}),
\end{equation}
where $\pi^{k+1}_*[\epsilon]$ is the optimal solution corresponding to the optimal value $J_*^k[\epsilon]$ for the problem in \cref{def: distributional constraint optimization problem definition}.

Based on \cref{eq: feasible region}, we can derive that
\begin{equation}
    \forall \epsilon \ge 0, \quad \exists \{\epsilon_\s, \s \in \mathcal{S}\} \in C, \quad \exists \pi \in \mathcal{F}^k(\{\epsilon_\s, \s \in \mathcal{S}\}), \quad  J^k_\pi[\{\epsilon_\s, \s \in \mathcal{S}\}] \ge J_*^k[\epsilon].
\end{equation}
Here $J^k_\pi[\{\epsilon_\s, \s \in \mathcal{S}\}]$ is the objective function value in \ref{def: point-wise constraint optimization problem definition} with the solution $\pi$ and KL constraints $\{\epsilon_\s, \s \in \mathcal{S}\}$.

Under the KL constraints $\{\epsilon_\s, \s \in \mathcal{S}\}$, the optimal value $J_*^k[\{\epsilon_\s, \s \in \mathcal{S}\}]$ of the problem in \cref{def: point-wise constraint optimization problem definition} is no less than $J^k_\pi[\{\epsilon_\s, \s \in \mathcal{S}\}]$. Therefore
\begin{equation}
    \exists \{\epsilon_\s, \s \in \mathcal{S}\}, \quad  J_*^k[\{\epsilon_\s, \s \in \mathcal{S}\}] \ge J^k_\pi[\{\epsilon_\s, \s \in \mathcal{S}\}] \ge  J_*^k[\epsilon].
\end{equation}
\cref{prop: point-wise superiority} is proven. \textbf{Q.E.D.}

\subsection{Proof of \cref{prop: State-dependent balance coefficient}}\label{appendix: proof of state-dependent balance coefficient}
Because the state space $\mathcal{S}$ possibly contains an infinite number of states, the optimization problem in \cref{def: point-wise constraint optimization problem definition} is probably a problem with infinite constraints, which is not easy to deal with. 
Therefore, we first start with a simplified version of the optimization problem, where the state space contains only a finite number of states, and further extend the conclusion on the simplified optimization problem to the original one in \cref{def: point-wise constraint optimization problem definition}.

The simplified optimization problem is stated as follows:
\begin{definition}[Simplified optimization problem with point-wise constraints]
\label{def: simplified optimization problem with point-wise constraints}
The simplified optimization problem with point-wise KL constraints on a state space with only  a finite number of  states $\mathcal{S}=\{\s_1, \s_2, \cdots, \s_m\}$ is defined as
\begin{align}
\max_{\pi} &\ \  \mathbb{E}_{\s\sim d_{\pi_\beta}(\cdot), \act \sim \pi(\cdot|\s)} \left[Q^{\pi^k}(\s, \act)-V^{\pi^k}(\s)\right] \label{eq: simplified optimization problem}\\
\text { s.t. } & D_{\mathrm{KL}}(\pi(\cdot | \s_i) \| \pi_{\beta}(\cdot | \s_i)) \leq \epsilon_i, \quad i=1, 2, \cdots, m \label{eq: simplified kl constraint}\\
 & \int_{\act\in\mathcal{A}} \pi(\act | \s_i) \diff\act =1, \quad i=1, 2, \cdots, m \label{eq: simplified normalization constraint}.
\end{align}
\end{definition}
Here, $\pi^k (k\in \mathbb{N})$ denotes the policy at the $k$-th iteration, $\pi_\beta$ is a behavior policy representing the action selection way in the offline dataset or current replay buffer, and $d_{\pi_{\beta}}(\s)$ is the state distribution of $\pi_\beta$. we utilize the optimal solution for \cref{def: simplified optimization problem with point-wise constraints} as $\pi^{k+1}$.

For this simplified optimization problem, we have a lemma below, whose derivation is related to AWR~\cite{awr} and AWAC~\cite{nair2020awac}:
\begin{lemma}
\label{lemma: simplified optimization problem}
Consider the optimization problem in \cref{def: simplified optimization problem with point-wise constraints}.
The optimal solution of $\pi^{k+1}$, denoted as $\pi_*^{k+1}$, satisfies that $\forall \s \in \{\s_1, \s_2, \cdots, \s_m\}, \act \in \mathcal{A}$,
\begin{equation}
    \pi_*^{k+1}(\act|\s) = \frac{\pi_\beta(\act|\s)}{Z_{\s}} \exp\left(\beta_{\s}(Q^{\pi^k}(\s, \act)-V^{\pi^k}(\s))\right).
    \label{eq: simplified adaptive optimal policy}
\end{equation}
\end{lemma}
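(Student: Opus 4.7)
The plan is to apply Lagrangian/KKT theory to the finite-state problem in \cref{def: simplified optimization problem with point-wise constraints}. Because $\mathcal{S}=\{\s_1,\ldots,\s_m\}$ is finite, only $2m$ scalar constraints are active in the program, and the primal is convex: the objective is linear in $\pi$, each KL term is strictly convex in $\pi(\cdot|\s_i)$, and the normalizers in \cref{eq: simplified normalization constraint} are affine. So I would first verify Slater's condition (the feasible region is assumed non-empty for every $\s$, so an interior point can be obtained by an arbitrarily small perturbation) and conclude strong duality, guaranteeing that any KKT point is the unique global maximizer.

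Next, I would form the Lagrangian with nonnegative multipliers $\lambda_1,\dots,\lambda_m$ for the KL bounds in \cref{eq: simplified kl constraint} and real multipliers $\mu_1,\dots,\mu_m$ for the normalization constraints in \cref{eq: simplified normalization constraint}. The crucial structural observation is that the Lagrangian decouples across states, i.e.\ the contribution from $\s_i$ depends only on $\pi(\cdot|\s_i)$, so the stationarity condition $\delta L/\delta\pi(\act|\s_i)=0$ is state-wise and reads $d_{\pi_\beta}(\s_i)(Q^{\pi^k}(\s_i,\act)-V^{\pi^k}(\s_i))-\lambda_i\bigl(\log\pi(\act|\s_i)-\log\pi_\beta(\act|\s_i)+1\bigr)-\mu_i=0$. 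Solving yields $\pi(\act|\s_i)\propto\pi_\beta(\act|\s_i)\exp\!\bigl(\tfrac{d_{\pi_\beta}(\s_i)}{\lambda_i}(Q^{\pi^k}(\s_i,\act)-V^{\pi^k}(\s_i))\bigr)$. Identifying $\beta_{\s_i}\defeq d_{\pi_\beta}(\s_i)/\lambda_i$ and absorbing $\exp(-\mu_i/\lambda_i-1)$ into a per-state partition function $Z_{\s_i}$ determined by \cref{eq: simplified normalization constraint} recovers exactly \cref{eq: simplified adaptive optimal policy}. Because each state carries its own multiplier $\lambda_i$, the resulting $\beta_{\s_i}$ is intrinsically state-dependent, which is the key departure from the single-$\beta$ solution obtained in the distributional-constraint setting of \cref{def: distributional constraint optimization problem definition}.

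The main obstacle I anticipate is the degenerate case $\lambda_i=0$, i.e.\ the KL constraint is inactive at $\s_i$, for which the symbolic expression $\beta_{\s_i}=d_{\pi_\beta}(\s_i)/\lambda_i$ is formally infinite. By complementary slackness this can happen only when the Dirac mass on $\argmax{\act}(Q^{\pi^k}(\s_i,\act)-V^{\pi^k}(\s_i))$ already satisfies the KL bound, and \cref{eq: simplified adaptive optimal policy} should then be read as the $\beta_{\s_i}\to\infty$ limit of the softmax. A clean write-up either treats this case separately or imposes mild regularity (bounded advantages on a compact action set, $\pi_\beta$ of full support on $\mathcal{A}$) to force $\lambda_i>0$ at the optimum. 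Apart from this boundary subtlety, the rest is a per-state instance of the AWR/AWAC derivation; doing it per state rather than globally is precisely the mechanism that turns a single distributional multiplier into a state-indexed balance coefficient, which is what the lemma asserts.
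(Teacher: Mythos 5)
Your proposal follows essentially the same route as the paper: form the per-state Lagrangian for the finite-state program, apply the KKT stationarity condition $\partial L/\partial\pi(\act|\s_i)=0$, and solve to obtain the exponentiated-advantage form with a state-indexed coefficient given by the ratio of $d_{\pi_\beta}(\s_i)$ to that state's KL multiplier and a per-state normalizer $Z_{\s_i}$. You are in fact somewhat more careful than the paper, which invokes only the KKT necessary conditions without verifying convexity or Slater's condition and does not address the degenerate case of an inactive KL constraint (zero multiplier) that you correctly flag as needing a separate limiting argument or a mild regularity assumption.
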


\begin{proof}
The Lagrangian function $L(\pi, \mathbf{\lambda}, \mathbf{\mu})$ is given by:
\begin{equation}
\begin{aligned}
    L(\pi, \mathbf{\lambda}, \mathbf{\mu})  =& -\mathbb{E}_{\s\sim d_{\pi_\beta}(\cdot), \act \sim \pi(\cdot|\s)} \left[Q^{\pi^k}(\s, \act)-V^{\pi^k}(\s)\right] \\
    & + \sum_{i=1}^m \lambda_i \Big(\int_{\act} \pi(\act|\s_i) \diff\act - 1\Big)  + \sum_{i=1}^m \mu_i \Big( D_{\mathrm{KL}}(\pi(\cdot | \s_i) \| \pi_{\beta}(\cdot | \s_i)) - \epsilon_i \Big),
\end{aligned}
\end{equation}
where $\mathbf{\lambda} = (\lambda_1, \lambda_2, \cdots, \lambda_m), \mathbf{\mu} = (\mu_1, \mu_2, \cdots, \mu_m)$ are Lagrange multipliers.
According to the KKT necessary conditions~\cite{KKT}, the optimal solution satisfies that $\forall \act \in \mathcal{A}, i \in \{ 1, 2, \cdots, m\}$,
\begin{equation}
\begin{aligned}
    \frac{\partial L}{\partial \pi(\act|\s_i)} &= -d_{\pi_{\beta}}(\s_i) \left(Q^{\pi^k}(\s_i, \act)-V^{\pi^k}(\s_i)\right) + \lambda_i  + \mu_i \left(\log\left(\frac{\pi(\act|\s_i)}{\pi_{\beta}(\act|\s_i)}\right) + 1\right) = 0.
\end{aligned}
\end{equation} 
Therefore, the optimal policy $\pi^{k+1}_*$ is:
\begin{equation}
\begin{aligned}
    \pi_*^{k+1}(\act|\s_i) = \frac{\pi_\beta(\act|\s_i)}{Z_{\s_i}} \exp\left(\beta_{\s_i}(Q^{\pi^k}(\s_i, \act)-V^{\pi^k}(\s_i))\right),
\end{aligned}
\end{equation} 
where $Z_{\s_i} = \exp\left(\frac{\lambda_i}{\mu_i} + 1\right), \beta_{\s_i} = \frac{d_{\pi_\beta}(\s_i)}{\mu_i}$ are state-dependent factors. 

Furthermore, due to the constraint in \cref{eq: simplified normalization constraint}, $Z_{\s_i}$ also equals to $\int_{\act\in\mathcal{A}} \pi_\beta(\act|\s_i) \exp\left(\beta_{\s_i}(Q^{\pi^k}(\s_i, \act)-V^{\pi^k}(\s_i))\right) \diff\act$.
\end{proof}


Now we consider extending the conclusion in \cref{lemma: simplified optimization problem}  to the more complex optimization problem in \cref{def: point-wise constraint optimization problem definition}, where the state space $\mathcal{S} = [s_{\min}, s_{\max}]^l$ is a compact set and contains an infinite number of states.
Without loss of generality, suppose that  $\mathcal{S} = [0, 1]$.
The derivation below is easy to transfer to $\mathcal{S} = [s_{\min}, s_{\max}]^l$ with small modifications.

Specifically, we construct a sequence sets $\{B_i, i=0,1,2,\cdots\}$ with each element $B_i = \{ j/2^{i}, j=0, 1, \cdots, 2^{i}\}$.
We can observe that $\{B_i, i=0,1,2,\cdots\}$ satisfies:
\begin{align}
    &\forall i \in \mathbb{N}_+, \quad B_i \subseteq \mathcal{S}; \label{eq: Bi first satisfaction}\\
     &\forall i \in \mathbb{N}_+, \quad B_i \subseteq B_{i+1};\\
     & \forall i \in \mathbb{N}_+, \quad |B_i| = 2^i + 1 < \infty, \quad \text{\textit{i.e.}, all $B_i$ are finite sets, and therefore all $B_i$ are compact};\\
     & \lim_{i\to\infty} \sup_{x\in \mathcal{S}} \inf_{y\in B_i} \| x - y \|_{\infty} = \lim_{i\to\infty} \frac{1}{2^{i+1}} = 0.\label{eq: Bi last satisfaction}
\end{align}

The qualities in \crefrange{eq: Bi first satisfaction}{eq: Bi last satisfaction}, together with the assumption in \cref{prop: State-dependent balance coefficient} that the feasible space constrained by \crefrange{eq: optimization problem}{eq: normalization constraint} is not empty for every $\s \in \mathcal{S}$, meets the prerequisites of the discretization method proposed by \cite{reemtsen1991discretization}.
Set $\pi_*^{k+1}[A]$ as the optimal solution of the following optimization problem:
\begin{align}
\pi_*^{k+1}[A] = &\argmax{\pi}  \mathbb{E}_{\s\sim d_{\pi_\beta}(\cdot), \act \sim \pi(\cdot|\s)} \left[Q^{\pi^k}(\s, \act)-V^{\pi^k}(\s)\right]\\
\text { s.t. } & D_{\mathrm{KL}}(\pi(\cdot | \s) \| \pi_{\beta}(\cdot | \s)) \leq \epsilon_\s, \quad \s \in A\\
 & \int_{\act\in\mathcal{A}} \pi(\act | \s) \diff\act =1, \quad  \s \in A,
\end{align}
where $A$ is a subset of $\mathcal{S}$.
According to the Theorem~2.1 in \cite{reemtsen1991discretization}, 
\begin{equation}
    \pi_*^{k+1}[B_i] \xrightarrow{i \to \infty} \pi_*^{k+1}[\mathcal{S}].
    \label{eq: policy extreme}
\end{equation}
For any $B_i$, because $|B_i|<\infty$, \cref{eq: simplified adaptive optimal policy} holds for $\pi_*^{k+1}[B_i]$.
By combining \cref{eq: simplified adaptive optimal policy} with \cref{eq: policy extreme}, we succeed in proving \cref{eq: adaptive optimal policy} in \cref{prop: State-dependent balance coefficient}.

Furthermore, we utilize a parameterized policy $\pi_{\mathbf{\phi}}$ to approximate the optimal policy $\pi_*^{k+1}$ in \cref{eq: adaptive optimal policy}, \textit{i.e.},
\begin{align}
    \mathbf{\phi} &= \argmin{\mathbf{\phi}} \mathbb{E}_{\s \sim d_{\pi_\beta}(\cdot)}\left[D_{\mathrm{KL}}(\pi_*^{k+1} (\cdot|\s) \| \pi_{\mathbf{\phi}}(\cdot|\s))\right] \\
    &=\argmax{\phi}\mathbb{E}_{\s \sim d_{\pi_\beta}(\cdot)} \left[ \frac{1}{Z_{\s}}  \mathbb{E}_{\act \sim \pi_\beta(\act|\s)} \left[ \exp(\beta_{\s} (Q^{\pi^k}(\s, \act)-V^{\pi^k}(\s))) \log \pi_{\mathbf{\phi}}(\act|\s) \right] \right].\label{eq: approximate to the optimal policy}
\end{align}

In practice, $Z_{\s}$ in \cref{eq: approximate to the optimal policy} is challenging to calculate.
We follow the derivation of AWR~\cite{awr} and AWAC~\cite{nair2020awac}, and omit the term $\frac{1}{Z_{\s}}$.
Therefore \cref{eq: approximate to the optimal policy} can be rewritten as 
\begin{align}
    \mathbf{\phi} &= \argmax{\phi} \mathbb{E}_{\s \sim d_{\pi_\beta}(\cdot)}\left[\mathbb{E}_{\act \sim \pi_\beta(\act|\s)} \left[  \exp(\beta_{\s} (Q^{\pi^k}(\s, \act)-V^{\pi^k}(\s))) \log \pi_{\mathbf{\phi}}(\act|\s) \right] \right] \\
    & = \argmax{\phi}\mathbb{E}_{(\s, \act)\sim \mathcal{D}} \left[ \exp(\beta_{\s} (Q^{\pi^k}(\s, \act)-V^{\pi^k}(\s))) \log \pi_{\mathbf{\phi}}(\act|\s) \right].
\end{align}
Thus \cref{eq: adaptive training objective} in \cref{prop: State-dependent balance coefficient} is also proven.  
\textbf{Q.E.D.}

\section{FamO2O's Extension to Non-AWR Algorithms}\label{sec: FamO2O extension}
In this section, we discuss extending FamO2O to non-AWR algorithms, specifically considering the Conservative Q-Learning (CQL)~\cite{kumar2020cql}.

CQL's policy update rule is given in \cref{eq: cql policy update}:
\begin{equation}
    \pi^{k+1} = \argmax{\pi} \mathbb{E}_{\s\sim \mathcal{D}, \act \sim \pi(\cdot|\s)} \left[ \alpha \cdot {\color{blue}Q^{k}(\s, \act)} - {\color{red}\log\pi(\act | \s)}  \right],
\label{eq: cql policy update}
\end{equation}
where $\pi^k$ and $Q^k$ represent the policy and Q function at iteration $k$ respectively, and $\alpha$ is a hyperparamter.
CQL already incorporates a conservative estimation for out-of-distribution data in its Q function update rule, thus lacking a policy constraint for policy updates in \cref{eq: cql policy update}. However, \cref{eq: cql policy update} presents a balance between exploitation (the {\color{blue}blue} part) and exploration (the {\color{red}red} part), determined by $\alpha$. We thus deploy FamO2O to adjust this balance adaptively per state.

Under FamO2O, the policy update rule morphs to \cref{eq: cql with FamO2O}:
\begin{align}
    &\pi^{k+1}_u = \argmax{\pi_u} \mathbb{E}_{\s\sim \mathcal{D}, \act \sim \pi_u(\cdot|\s, {\color{red}\alpha_\s})} \left[ {\color{blue}\alpha_\s} \cdot  Q^k(\s, \act) -\log\pi_u(\act | \s, {\color{red}\alpha_\s})  \right], \label{eq: cql with FamO2O} \\
    &\text{where}\quad \alpha_\s = \pi_b^k(\s),
\end{align}
where $\pi_u$ is the universal model taking a state and balance coefficient as input to yield an action, and $\pi_b$ is the balance model mapping a state to a balance coefficient. The changes, denoted in {\color{red}red} and {\color{blue}blue}, depict FamO2O's additional input and the substitution of the adaptive balance coefficient $\alpha_\s$ for the pre-defined one $\alpha$, respectively.

Additionally, the balance model update rule (\cref{eq: family cql balance model update}) aims to maximize the corresponding Q value by finding an optimal $\alpha_\s$, akin to \cref{eq: the optimization target of the balance model}.
\begin{equation}
    \pi_b^{k+1} = \argmax{\pi_b} \mathbb{E}_{\s\sim\mathcal{D}} \Big[Q^k(\s, \underbrace{\pi_u^{k+1}(\s, \overbrace{\pi_b(\s)}^{\mathclap{\text{balance coefficient }\beta_\s}})}_{\text{action}})\Big].
    \label{eq: family cql balance model update}
\end{equation}

As for Q function updates, by consolidating balance model $\pi_b: \mathcal{S}\mapsto\mathcal{B}$ and $\pi_u: \mathcal{S}\times \mathcal{B}\mapsto \mathcal{A}$ into a standard policy $\pi_u(\cdot, \pi_b(\cdot)): \mathcal{S}\mapsto \mathcal{A}$, these updates remain identical to standard CQL.

\section{Implementation Details}\label{sec: implementation details}
The following section outlines the specifics of baseline (\cref{appendix: baseline implementation}) and FamO2O (\cref{appendix: FamO2O implementation details}) implementations.

\subsection{Baseline Implementation}\label{appendix: baseline implementation}
We commence by detailing our baseline implementation. For IQL~\cite{iql}, AWAC~\cite{nair2020awac}, and Balanced Replay~\cite{lee2022pessimistic}, which have been tested in an offline-to-online RL setting in their respective papers, we utilize the official codes\footnote{IQL and AWAC: \url{https://github.com/rail-berkeley/rlkit}, commit ID c81509d982b4d52a6239e7bfe7d2540e3d3cd986.}\footnote{Balanced Replay: \url{https://github.com/shlee94/Off2OnRL}, commit ID 6f298fa}.

For CQL~\cite{kumar2020cql} and TD3+BC~\cite{td3+bc}, we introduce an online fine-tuning process based on the recommended offline RL codes by the authors\footnote{CQL: \url{https://github.com/young-geng/JaxCQL}, commit ID 80006e1a3161c0a7162295e7002aebb42cb8c5fa.}\footnote{TD3+BC: \url{https://github.com/sfujim/TD3_BC}, commit ID 8791ad7d7656cb8396f1b3ac8c37f170b2a2dd5f.}. Specifically, our online fine-tuning for CQL and TD3+BC mirrors the implementation in \href{https://github.com/rail-berkeley/rlkit}{rlkit}, that is, appending samples acquired during online interactions to the offline dataset while maintaining the same training objective during offline pre-training and online fine-tuning.

For SAC~\cite{sac} and BC, we leverage CQL's codebase, which has already incorporated these two algorithms.

Regarding training steps, we adhere to IQL's experiment setting for a fair comparison across all methods in \cref{fig: compare-with-baseline}, which involves $10^6$ gradient steps each for offline pre-training and online fine-tuning. It should be noted that due to AWAC's overfitting issue, as discussed by \cite{iql}, we limit AWAC's offline pre-training to $2.5\times 10^4$ gradient steps, as recommended. For non-AWR algorithms comparison, namely CQL~\cite{kumar2020cql}, BR~\cite{lee2022pessimistic}, and CQL+FamO2O, we employ $2\times 10^6$ offline gradient steps, as suggested in CQL's implementation, and $1\times 10^6$ online gradient steps, aligning with IQL.

\subsection{Implementation of FamO2O}\label{appendix: FamO2O implementation details}
Next, we provide FamO2O's implementation details.
We implement our FamO2O on the official codes of AWAC~\cite{nair2020awac}, IQL~\cite{iql}, and CQL~\cite{kumar2020cql} discussed above.
Our major modification is 
\begin{inlinelist}
\item adding a balance model and
\item making the original policy conditioned on the balance coefficient (called ``universal model" in our paper).
\end{inlinelist}
Except for these two modifications, we do not change any training procedures and hyperparameters in the original codes.

For the first modification, we employ the stochastic model used by SAC in \href{https://github.com/rail-berkeley/rlkit}{rlkit} to construct our balance model and implement the training process of the balance model described in \cref{sec: learning balance model}.
For the second modification, we change the input dimension of the original policy from the state dimension $l$ to $l + l_b$, where $l_b$ is the dimension of balance coefficient encodings.
We encode balance coefficients in the same way as the positional encoding proposed by \cite{vaswani2017attention}.
This encoding design expands the dimension of the balance coefficients and avoids the balance coefficients being neglected by the universal model.
%

For IQL with FamO2O, adaptive balance coefficients are no less than $1.0$ (Locomotion) / $8.0$ (AntMaze) and no larger than $5.0$ (Locomotion) / $14.0$ (AntMaze).
For AWAC with FamO2O, the adaptive balance coefficients are no less than $2.0$ (Locomotion) / $9.0$ (AntMaze) and no larger than $3.0$ (Locomotion) / $11.0$ (AntMaze).
For CQL with FamO2O, the adaptive balance coefficients are no less than $0.5$ and no larger than $1.5$.
To ensure a fair comparison, in \cref{fig: fixed-constraint-degree}, the fixed balance coefficient ranges cover the adaptive ranges discussed above.

As for training steps, for IQL with FamO2O, we use $10^6$ gradient steps in the offline pre-training phase and $10^6$ gradient steps in the online fine-tuning phase.
For AWAC with FamO2O, we use $7.5\times 10^4$ gradient steps in the offline pre-training phase and $2\times 10^5$ (Locomotion) / $1\times 10^6$ (AntMaze) gradient steps in the online fine-tuning phase.
For CQL with FamO2O, we use $2\times 10^6$ gradient steps in the offline pre-training phase and $1\times 10^6$ in the online fine-tuning phase.

For more implementation details, please refer to \crefrange{tab: IQL details}{tab: CQL details}.

\section{More Experimental Results}\label{appendix: more results}
\subsection{Sensitivity Analysis on Different Ranges of Balance Coefficients}
To find out whether FamO2O's performance is sensitive to the range of balance coefficients, we implement FamO2O on IQL~\cite{iql} with 3 different balance coefficient ranges (\textit{i.e.}, $[1, 5]$, $[1, 4]$, and $[2, 4]$) on 3 datasets of different qualities (\textit{i.e.}, medium-replay, medium, and medium-expert).
The results can be found in \cref{fig:different ranges}.
We further implement FamO2O on IQL with 4 different balance coefficient ranges (\textit{i.e.}, $[6, 12]$, $[8, 12]$, $[8, 14]$, and $[9, 11]$) on 3 datasets of different maze sizes (\textit{i.e.}, umaze, medium, large). The results can be viewed in \cref{fig: antmaze different ranges}.
It can be observed that on various datasets of different qualities or maze sizes, the performance of FamO2O does not vary significantly with different ranges of the balance coefficients.
This indicates that within a reasonable scope, FamO2O is insensitive to the range of coefficient balances.

\begin{figure}[!h]
    \centering
    \includegraphics[width=1.0\columnwidth]{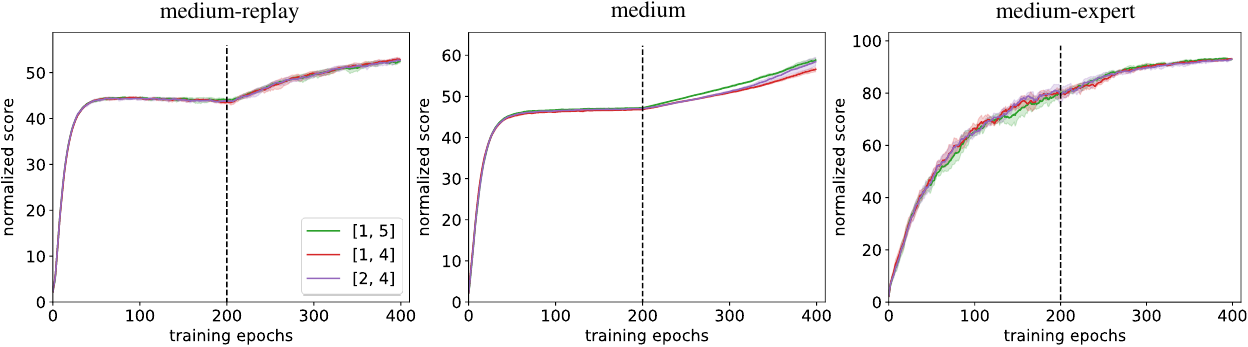}
    \vspace{-10pt}
    \caption{Sensitivity analysis on different ranges of balance coefficients. We choose 3 different ranges (\textit{i.e.}, $[1, 5]$, $[1, 4]$, and $[2, 4]$) and \textbf{3 different dataset qualities} (\textit{i.e.}, medium-replay, medium, and medium-expert) in the D4RL~\cite{fu2020d4rl} HalfCheetah environment. Results are evaluated over 6 random seeds. The shade around curves denotes 95\% CIs of the policy performance. Each epoch contains 5k gradient steps. The dashed lines divide the pre-training and fine-tuning.}
    \label{fig:different ranges}
    \vspace{-5pt}
\end{figure}

\begin{figure}[!h]
    \centering
    \includegraphics[width=1.0\columnwidth]{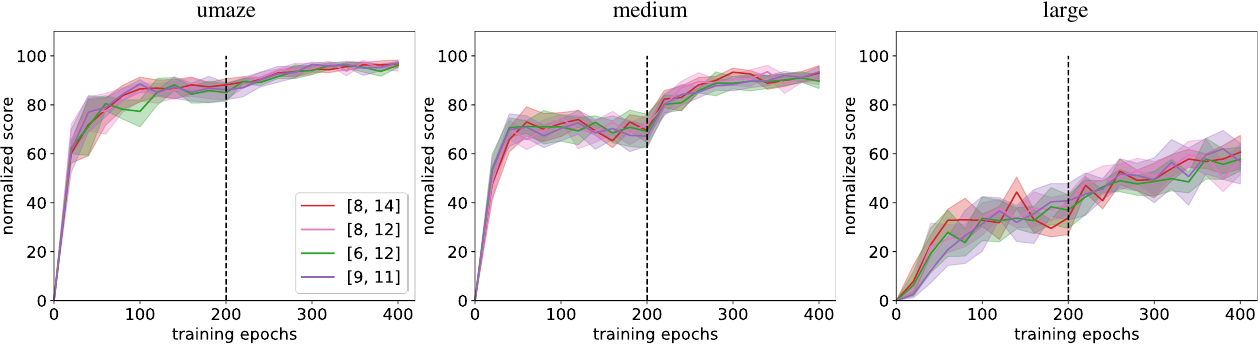}
    \vspace{-10pt}
    \caption{Sensitivity analysis on different ranges of balance coefficients. We choose 4 different ranges (\textit{i.e.}, $[6, 12]$, $[8, 12]$, $[8, 14]$, and $[9, 11]$) and \textbf{3 different maze sizes} (\textit{i.e.}, umaze, medium, and large) in the D4RL~\cite{fu2020d4rl} AntMaze play environments. Results are evaluated over 6 random seeds. The shade around curves denotes 95\% CIs of the policy performance. Each epoch contains 5k gradient steps. The dashed lines divide the pre-training and fine-tuning.}
    \label{fig: antmaze different ranges}
    \vspace{-10pt}
\end{figure}

\subsection{Comparing FamO2O and Balance Coefficient Annealing}

\begin{wraptable}{r}{0.55\textwidth}
\vspace{-23pt}
\begin{threeparttable}
  \small
  \caption{Comparing FamO2O and balance coefficient annealing, FamO2O notably surpasses the latter on the majority of D4RL locomotion datasets~\cite{fu2020d4rl}, demonstrating statistically significant performance improvement.}
  \vspace{10pt}
  \centering
    \renewcommand{\tabcolsep}{6.5pt}
  \renewcommand{\arraystretch}{1.5}
  \begin{tabular}{lcc}
    \specialrule{0.12em}{0pt}{0pt}
	Dataset 
	& FamO2O 
	& Annealing
	\\ \hline
    hopper-mr-v2
    & \colorbox{myblue}{97.64}
    & 87.82
    \\
    hopper-m-v2
    & \colorbox{myblue}{90.65}
    & 72.98
    \\
    hopper-me-v2
    & \colorbox{myblue}{87.28}
    & 72.26
    \\
    halfcheetah-mr-v2
    & 53.07
    & \colorbox{myblue}{53.35}
    \\ 
    halfcheetah-m-v2
    & \colorbox{myblue}{59.15}
    & 57.37
    \\ 
    halfcheetah-me-v2
    & 93.10
    & \colorbox{myblue}{93.91}
    \\ 
    walker2d-mr-v2
    & \colorbox{myblue}{92.85}
    & 87.29
    \\ 
    walker2d-m-v2
    & 85.50
    & \colorbox{myblue}{85.89}
    \\ 
    walker2d-me-v2
    & 112.72
    & 112.72
    \\ 
    \hline
    \textbf{total} 
    & \colorbox{myblue}{\textbf{771.96}}
    & \textbf{723.61} 
    \\
    \textbf{95\% CIs} & \textbf{753.52\textasciitilde788.51} & \textbf{704.21\textasciitilde744.43} \\
    \specialrule{0.12em}{0pt}{0pt}
  \end{tabular}
  \label{tab:anneal-FamO2O}
\end{threeparttable}
\vspace{-15pt}
\end{wraptable}

To delve deeper into the efficacy of FamO2O's balance model more thoroughly, we set up a comparison against the balance coefficient annealing. The balance coefficient annealing is a process that gradually augments the balance coefficient value as the fine-tuning stage proceeds. This method is intuitively reasonable, as the initial conservativeness due to the balance coefficient annealing helps to curb extrapolation errors at the commencement of fine-tuning, and as the process continues, this conservativeness is gradually eased to facilitate policy improvement.

We incorporate balance coefficient annealing into IQL~\cite{iql}, named IQL+annealing, providing a basis for comparison with IQL+FamO2O. The comparison results are presented in \cref{tab:anneal-FamO2O}. The results show that FamO2O exceeds the performance of balance coefficient annealing across most of the D4RL Locomotion datasets~\cite{fu2020d4rl}.

In addition, we calculate the 95\% CIs for both IQL+FamO2O and IQL+annealing in \cref{tab:anneal-FamO2O}, following the suggestion by rliable~\cite{rliable}. The lower CI for IQL+FamO2O is mush higher than the upper CI for IQL+annealing, signifying a statistically significant improvement of FamO2O over balance coefficient annealing. These findings establish that in comparison to annealing, FamO2O showcases a more sophisticated and effective approach in adaptively modulating the balance between improvement and constraint.

\section{Potential Negative Societal Impacts}
This work proposes a plug-in framework, named FamO2O, which can be implemented on top of the existing offline-to-online RL algorithms to further improve policy performance.
Therefore, the potential negative societal impacts of our method are similar to those of the offline-to-online RL.
Generally, FamO2O would greatly improve policy performance.
However, as with most RL algorithms, FamO2O cannot guarantee to take safe and effective actions in all kinds of scenarios.
We advocate that the users of FamO2O should be aware of the potential consequences and utilize FamO2O safely, especially in online environments such as self-driving, robotics, and healthcare.

\begin{table}[!h]
\caption{Details of FamO2O implemented on \textbf{IQL}~\cite{iql} for the \textbf{D4RL Locomotion and AntMaze} benchmark~\cite{fu2020d4rl}. Except for our newly added balance model, the hyperparameters of our method keep the same as those of the vanilla IQL.}
\label{tab: IQL details}
\centering
\vspace{5pt}
\begin{tabular}{cll}
    \toprule
    & Name & Value \\
    \midrule
    \multirow{5}{*}{Balance model} & optimizer & Adam~\cite{Adam} \\
    & learning rate & $3\times 10^{-4}$ \\
    & update frequency & 5 (Locomotion) / 1 (AntMaze) \\
    & hidden units & $[256, 256]$\\
    & activation & ReLU~\cite{relu}\\
    \midrule
    \multirow{5}{*}{Universal model} & optimizer & Adam~\cite{Adam}\\
    & learning rate & $3\times 10^{-4}$ \\
    & update frequency & 1 \\
    & hidden units & $[256, 256]$ \\
    & activation & ReLU~\cite{relu} \\
    \midrule
    \multirow{6}{*}{$Q$ function model} & optimizer & Adam~\cite{Adam} \\
    & learning rate & $3\times 10^{-4}$ \\
    & update frequency & 1 \\
    & hidden units & $[256, 256]$ \\
    & activation & ReLU~\cite{relu} \\
    & target $Q$ soft update rate & $5\times 10^{-3}$\\
    \midrule
    \multirow{7}{*}{$V$ function model} & optimizer & Adam~\cite{Adam} \\
    & learning rate & $3\times 10^{-4}$ \\
    & update frequency & 1 \\
    & hidden units & $[256, 256]$ \\
    & activation & ReLU~\cite{relu} \\
    & target $V$ soft update rate & $5\times 10^{-3}$\\
    & quantile & 0.7 (Locomotion) / 0.9 (AntMaze) \\
    \midrule
    \multirow{3}{*}{Other training parameters} & batch size & 256 \\
    & replay buffer size & $2 \times 10^6$ \\
    & discount & 0.99 \\
    \bottomrule
  \end{tabular}
\end{table}

\begin{table}[t]
\caption{Details of FamO2O implemented on \textbf{AWAC}~\cite{nair2020awac} for the \textbf{D4RL Locomotion and AntMaze} benchmark~\cite{fu2020d4rl}. Except for our newly added balance model, the hyperparameters of our method keep the same as those of the vanilla AWAC.}
\label{tab: AWAC details}
\centering
\vspace{5pt}
\begin{tabular}{cll}
    \toprule
    & Name & Value \\
    \midrule
    \multirow{5}{*}{Balance model} & optimizer & Adam~\cite{Adam} \\
    & learning rate & $3\times 10^{-4}$ \\
    & update frequency & 5 \\
    & hidden units & $[256, 256]$\\
    & activation & ReLU~\cite{relu}\\
    \midrule
    \multirow{6}{*}{Universal model} & optimizer & Adam~\cite{Adam}\\
    & learning rate & $3\times 10^{-4}$ \\
    & update frequency & 1 \\
    & hidden units & $[256, 256, 256, 256]$ \\
    & activation & ReLU~\cite{relu} \\
    & weight decay & $1\times 10^4$\\
    \midrule
    \multirow{6}{*}{$Q$ function model} & optimizer & Adam~\cite{Adam} \\
    & learning rate & $3\times 10^{-4}$ \\
    & update frequency & 1 \\
    & hidden units & $[256, 256]$ \\
    & activation & ReLU~\cite{relu} \\
    & target $Q$ soft update rate & $5\times 10^{-3}$\\
    \midrule
    \multirow{3}{*}{Other training parameters} & batch size & 1024 \\
    & replay buffer size & $2 \times 10^6$ \\
    & discount & 0.99 \\
    \bottomrule
  \end{tabular}
\end{table}

\begin{table}[t]
\caption{Details of FamO2O implemented on \textbf{CQL}~\cite{kumar2020cql} for the \textbf{D4RL Locomotion} benchmark~\cite{fu2020d4rl}. Except for our newly added balance model, the hyperparameters of our method keep the same as those of the vanilla AWAC.}
\label{tab: CQL details}
\centering
\vspace{5pt}
\begin{tabular}{cll}
    \toprule
    & Name & Value \\
    \midrule
    \multirow{5}{*}{Balance model} & optimizer & Adam~\cite{Adam} \\
    & learning rate & $3\times 10^{-4}$ \\
    & update frequency & 1 \\
    & hidden units & $[256, 256]$\\
    & activation & ReLU~\cite{relu}\\
    \midrule
    \multirow{6}{*}{Universal model} & optimizer & Adam~\cite{Adam}\\
    & learning rate & $3\times 10^{-4}$ \\
    & update frequency & 1 \\
    & hidden units & $[256, 256]$ \\
    & activation & ReLU~\cite{relu} \\
    \midrule
    \multirow{6}{*}{$Q$ function model} & optimizer & Adam~\cite{Adam} \\
    & learning rate & $3\times 10^{-4}$ \\
    & update frequency & 1 \\
    & hidden units & $[256, 256]$ \\
    & activation & ReLU~\cite{relu} \\
    & target $Q$ soft update rate & $5\times 10^{-3}$\\
    \midrule
    \multirow{3}{*}{Other training parameters} & batch size & 256 \\
    & replay buffer size & $2 \times 10^6$ \\
    & discount & 0.99 \\
    \bottomrule
  \end{tabular}
\end{table}
\end{appendices}

\end{document}